\providecommand{\U}[1]{\protect\rule{.1in}{.1in}}
\newtheorem{theorem}{Theorem}
\newtheorem{corollary}[theorem]{Corollary}
\newtheorem{lemma}[theorem]{Lemma}
\newenvironment{proof}[1][Proof]{\noindent\textbf{#1.} }{\ \rule{0.5em}{0.5em}}
\begin{document}

\title{The Faulty GPS Problem: Shortest Time Paths in Networks with Unreliable
Directions }
\author{Steve Alpern\\ORMS Group, WBS, University of Warwick}
\maketitle

\begin{abstract}
This paper optimizes motion planning when there is a known risk that the road
choice suggested by a Satnav (GPS) is not on a shortest path. At every branch
node of a network $Q$, a Satnav (GPS) points to the arc leading to the
destination, or home node, $H$ - but only with a high known probability $p.$
Always trusting the Satnav's suggestion may lead to an infinite cycle. If one
wishes to reach $H$ in least expected time, with what probability $q=q\left(
Q,p\right)  $ should one trust the pointer (if not, one chooses randomly among
the other arcs)? We call this the Faulty Satnav (GPS) Problem. We also
consider versions where the trust probability $q$ can depend on the degree of
the current node and a `treasure hunt' where two searchers try to reach $H$
first. The agent searching for $H$ need not be a car, that is just a familiar
example -- it could equally be a UAV receiving unreliable GPS information.

This problem has its origin not in driver frustration but in the work of Fonio
et al (2017) on ant navigation, where the pointers correspond to pheromone
markers pointing to the nest. Neither the driver or ant will know the exact
process by which a choice (arc) is suggested, which puts the problem into the
domain of how much to trust an option suggested by AI.

\end{abstract}

\section{Introduction}

A satellite navigation system (called Satnav, or GPS) suggests a road to take
at every intersection. More abstractly, it suggests an arc of the traffic
network $Q$ to take from any branch node. This arc is supposed to lie on the
shortest path to the destination, or Home node $H.$ Of course it is well known
that errors occur, so we model this by assuming that at every branch node
other than $H$ there is a pointer (to one of the incident arcs) which is
correct (goes along a shortest path) with a known probability $p$ called the
\textit{reliability }- otherwise it points to a random incorrect arc. The set
of pointers are fixed throughout the journey, so if a node is encountered
several times, the pointer will always suggest the same arc. If one always
follows the pointer, one may cycle infinitely and never reach the destination.
More generally, always following the pointer may not minimize the total travel
time to $H.$ There are many ways that a real life driver deals with this
problem. She might remember that taking a particular arc at an earlier
occasion at the current node led back to it, so she might try something
different the second time. We will not consider this or other advanced
techniques that an alert driver with a good memory might use. Rather, we adopt
a simple model of the driver (or autonomous vehicle navigation program). We
assume a simple \textit{trust probability} $q$ with which to follow the
pointer. The question we consider is to how to optimize the trust $q$ to
minimize time to $H,$ given the initial and final nodes $I$ and $H,$ the
network $Q,$ and the reliability $p.$ With a probability equal to the trust
value $q,$ the arc indicated by the pointer is taken; otherwise one of the
other arcs is chosen randomly. We call the question of optimizing trust The
\textit{Faulty GPS Problem}. After formalizing the problem in Section 2, we
present in Section 3 a \textit{slow method} of solving three particular
networks: a triangle, a circle-with-spike and a simple tree. We then develop a
general theory for (i) stars (Section 4), (ii) networks with bridges (Section
5) and (iii) trees (Section 6). In Section 7 we determine how long it takes to
cross a line graph with varying length arcs. In Section 8 we consider briefly
small cycle graphs of odd and even lengths. In Section 9 we consider a game
theoretic \textit{treasure hunt} version of the problem, in which two drivers
with the same GPS system try to be the first one to reach the destination $H.$
We solve this game on a very simple line graph, in the cases where the drivers
start at the same or different node. Section 10 concludes.

It is worth mentioning that our Satnav metaphor is only that, a metaphor to
simply describe the problem of shortest paths with unreliable directions. Real
Satnav errors are not likely to be random as assumed here, but rather only
suboptimal and generally still pointing in a good direction. We are not
recommending our strategies to drivers! In fact the real life problem that
motivated this paper comes from biology, as described in the next paragraph.

This problem has its origin not in a driver GPS setting, but in a study of how
a species of ants navigates back to their nest, carried out by Fonio et al
(2016). They settled a long standing question by chemical analysis of
pheromones laid by individual ants, showing that these deposits formed a
decentralized system of pointers towards the nest. There and in Boczkowski,
Korman, A and Rodeh, Y. (2018) a deep computer science analysis of query
complexity and move complexity is carried out on unit tree networks (all arcs
have unit length). It should be observed that for deterministic shortest path
problems, a solution for unit networks could be easily applied to general ones
by the insertion of additional degree two nodes at regular intervals. However
in the Faulty GPS Problem such additional nodes greatly increase travel times,
since we have not precluded backtracking. So considering networks of general
arc lengths (as well as cycles) is required.

In our problem, the driver (searcher) does not see the whole network, only the
node he currently occupies and its incident arcs. In this respect the problem
is similar to the maze problem of Gal and Anderson (1990). There also, the
searcher adopts a randomized strategy for leaving the current node. However
instead of a pointer, the searcher has available markings he is allowed to
make on earlier visits to that node. Allowing such marking in our problem is
an interesting variation for future work, as it corresponds to driver memory
alluded to above. More generally, the problem of finding the destination node
$H$ could be seen as a network search problem. If Nature is viewed as
antagonistic, the game models from Gal (1979) up to the discrete arc-choice
model of Alpern (2017) could be seen as related. If the problem of inaccurate
directions at a node can be thought of as a sort of search cost at the node,
then the model of Baston and Kikuta (2013) is related. The game theoretic
analysis of Section 9 follows the first-to-find paradigm of Nakai (1986) and
Hohzaki (2013) and Duvocelle et al (2017) and is similar to the
winner-take-all game of Alpern and Howard (2017).

The problem on the line graph treated in Section 8 has similarities with what
is known as dichotomous, or high-low, search on the line. After each move of
arbitrary size along the line, the searcher is told the direction but not the
distance to the target location $H,$ which the searcher wishes to find in the
least number of moves or some related efficiency measure. Unlike the current
version, going past the target does not solve the problem. In some
applications, the target is the demand for a product, as in the newsboy
problem. Some of the original papers in this area are Baston and Bostock
(1985), Alpern (1985), Alpern and Snower (1988) and Reyniers (1990), as
surveyed in Hassin and Sarid (2018). Computer scientists have worked on
related problems from a different point of view, for example Miller and Pelc (2015).

From a more abstract AI perspective, the problem addressed in specific form
here is how much to trust a course of action suggested by a process such as
GPS planning, when the exact algorithm underlying the process is not known.

\section{The Satnav (GPS) Problem}

This section formalizes the Satnav Problem on a network $Q.$ The network $Q$
has a node set $\mathcal{N}$ and a distinguished home node $H\in\mathcal{N}$
which represents the nest (for the ant probem) or the destination (in the
satnav interpretation). The arcs $e$ of $Q$ have given lengths $\lambda\left(
e\right)  $. The branch nodes $\mathcal{N}_{\mathcal{B}}$ of $\mathcal{N}$ are
the nodes of degree at least $2$ other than the destination $H$ itself. At
these nodes the agent who wants to get home must make a decision as to which
arc to take next. A \textit{direction vector} $d:\mathcal{N}_{\mathcal{B}%
}\rightarrow\mathcal{N}$ tells the agent which arc to take. So for each branch
node $i\in\mathcal{N}_{\mathcal{B}},$ the arc $d\left(  i\right)  $ is is
specified by giving a node $j=d\left(  i\right)  $ which is adjacent to $i.$
Alternatively we can specify the arc incident to node $i$. (For example if two
arcs lead to the same node we must specify the arc, but this is unusual, and
we can exclude multiple arcs if we wish.) So we think of $d\left(  i\right)  $
as an arc incident to node $i,$ as the agent doesn't know which node it leads
to. The set of direction vectors is denoted by $\mathcal{D},$ and a measure
$\mu$ on $\mathcal{D}$ is defined as follows ($\mu\left(  d\right)  $ is the
probability that a Satnav with reliability $p$ chooses the direction vector
$d$): Each arc $d\left(  i\right)  $ is chosen independently: with a given
probability $p$ (called the \textit{reliability}) an arc on a shortest path to
$H$ is chosen randomly (generically such an arc is unique); with complementary
probability $1-p$ one of the other arcs is randomly chosen. A simple strategy
for the Searcher is always to choose the arc $d\left(  i\right)  $ with a
fixed probability $q,\mathit{\ }$called the \textit{trust} (or trust
probability). Following such a strategy, the expected time to reach the home
node $H$ from the initial node $I$ is denoted by $T.$ To indicate that $T$ is
the travel time from one node to another, we also may write $T=T\left(
A,B\right)  $ for the time from $A$ to $B.$ When the home node $H$ is fixed,
we can write $T_{A}=T\left(  A,H\right)  $ for simplicity of notation. The
Satnav Problem is to minimize $T$ by choosing the optimal \textit{trust
probability} (or just \textit{trust}) $\hat{q}=\hat{q}\left(  p\right)  .$ For
a given direction vector $d,$ the trust $q$ determines a Markov chain on the
nodes $\mathcal{N}$ of $Q,$ with absorbing node $H,$ and has an expected
hitting time $T^{d}\left(  A\right)  $ from every possible starting node $A.$
The time $T$ is an average time over all direction vectors%
\begin{equation}
T\left(  A,H\right)  =T_{A}=\sum\limits_{d\in\mathcal{D}}\mu\left(  d\right)
~T^{d}\left(  A\right)  .\label{T sum of Td}%
\end{equation}
Note that $T\left(  A,H\right)  $ is a function of $p$ and $q,$ as $\mu\left(
d\right)  $ is a function of $p$ and $T^{d}\left(  A,H\right)  $ is a function
of $q.$

We will make the general assumption that $Q$ has no loops or multiple arcs and
shortest paths are unique. Actually we can deal with the last two in some
cases. We can also assume that $H$ is not a cut node.

We also consider a \textit{counting agent} variation. This assumes that the
searching agent, on reaching a node, can count how many arcs are incident at
the node (he knows the degree of the node), and can choose to follow the
direction at a node with a probability $q_{k}$ where $k$ is the degree of the
node. In this variation the choice variable is the vector $\left(
q_{k}\right)  $ where $k$ varies over the degrees of the branch nodes of the
network. Sometimes we consider the optimization problem for the trust at a
single node, when trusts at all other nodes are fixed.

Our model is illustrated in Figure 1, where we show a network with a
destination node $H$ and a direction vector (solid arrows at branch points).
We add a dashed arrow at the upper left leaf node to indicate that one always
reflects from that node. The correct pointers (leading to shortest paths to
$H$) are in green, the incorrect ones in red. (These colors are for the
reader, not for the searcher.) Note that if $q=1$ and one gets to the (top
left) leaf arc, then one never leaves it. Similarly, if one never follows the
arrows, $q=0,$ then the leaf arc at $H$ is never taken (this is a simple case
of the argument in Lemma 4).%
    \begin{figure}[H]
     \centering
     \includegraphics[width=0.85\textwidth]{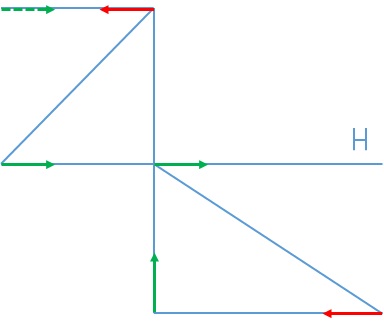}
     \caption{A direction 6-vector, possibly for $p=2/3.$}
    \end{figure}
\section{Examples}

Before developing any theory, we first introduce three simple examples which
show how the GPS problem can be solved by a `slow method'. In some cases we
will see later how the analysis can be simplified by more general theory
developed later. Our main interest is how, in each example, the optimal trust
probability $\hat{q}$ depends on the reliability $p.$ In some cases we also
consider the counting searcher problem.

\subsection{A triangle network}

Consider the network $Q_{1}$ pictured below in Figure 2. We will generally
take the length $x$ of the third side to be $3$ to give exact values, but
giving an arbitrary length shows the effect of arc length on the solution to
this problem.%
    \begin{figure}[H]
     \centering
     \includegraphics[width=0.65\textwidth]{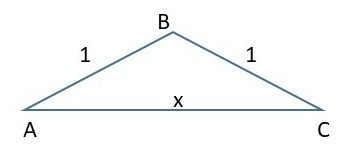}
     \caption{Triangle network}
    \end{figure}
We consider the above triangle with $I=A$ as the starting node and $H=C$ the
home, or destination, node. Let $+$ and $-$ denote clockwise and
anti-clockwise directions for pointers. Take $x>2$ so that the correct
directions (shortest paths to $C)$ are $+$ at $A$ and $+$ at $B.$ There are
four possible direction vectors, which we label as $d^{1}=\left(  +,+\right)
$ (both correct), $d^{2}=\left(  -,-\right)  $ (both wrong), $d^{3}=\left(
+,-\right)  $ (correct at $A,$ wrong at $B)$ and $d^{4}=\left(  -,+\right)  $
(wrong at $A,$ correct at $B$). Their respective probabilities are given by
$p^{2},\left(  1-p\right)  ^{2},$ $p~\left(  1-p\right)  ,$ and $\left(
1-p\right)  ~p.$ Let $a_{i}=T^{d^{i}}\left(  A,C\right)  $ and $b_{i}%
=T^{d^{i}}\left(  B,C\right)  $ denote the expected times to $C$ from $A$ and
from $B$ for trust probability $q$ and direction vector $d^{i}.$ Note that $p$
does not yet come into these probabilities.

When $d=d^{1}$ we have the equations
\begin{align*}
a_{1}  & =q\left(  1+b_{1}\right)  +\left(  1-q\right)  \left(  x\right)
,~b_{1}=q\left(  1\right)  +\left(  1-q\right)  \left(  1+a_{1}\right)
,\text{ so}\\
a_{1}  & =\allowbreak\frac{2q+x-qx}{q^{2}-q+1}\text{ and }b_{1}=\frac
{q-\left(  q-1\right)  \left(  q-x\left(  q-1\right)  +1\right)  }{q^{2}-q+1}.
\end{align*}
Similarly we have the formulae%
\begin{align*}
a_{2}  & =\frac{-2q+qx+2}{-q+q^{2}+1},~b_{2}=\frac{q+q^{2}x-q^{2}+1}%
{-q+q^{2}+1}~\\
a_{3}  & =\frac{2q+x-qx}{(1+q)(1-q)},b_{3}=\frac{-q^{2}x+qx+q^{2}%
+1}{(1+q)(1-q)},\text{and}\\
a_{4}  & =\frac{-2q+qx+2}{q\left(  2-q\right)  },~b_{4}=\frac{-2q-q^{2}%
x+qx+q^{2}+2}{q\left(  2-q\right)  }.
\end{align*}

Thus the expected time from $A$ to $C=H$ is given by $T_{A}=T\left(
A,C\right)  .$
\[
T\left(  A,C\right)  =p^{2}~a_{1}\left(  q\right)  +\left(  1-p\right)
^{2}~a_{2}\left(  q\right)  +p\left(  1-p\right)  ~a_{3}\left(  q\right)
+p\left(  1-p\right)  ~a_{4}\left(  q\right)  ,
\]
with a similar formula for $T\left(  B,C\right)  .$ We can see in Figures 3
and 4 how $T_{A}$ and $T_{B}$ vary with $q$ when $x=3$ and $p=3/4=.75$ and
$p=0.96.$ For $p=.75,$ starting from $A$ the optimal $q$ is about 0.68 and
from $B$ it is about 0.72, while for $p=.96$ the optimal $q$ is about $.885$
starting at $A$ and about $.879$ starting at $B.$ Thus the order has reversed.
We numerically calculate a value of $\tilde{p}\simeq0.925$ when there is a
uniformly optimal trust value $\hat{q}\simeq0.84$ (optimal for any start). For
$p<$ $\tilde{p},$ we have $\hat{q}\left(  A\right)  $ (starting at $A$%
)$<\hat{q}\left(  B\right)  ,$ while for $p>\tilde{p}$ we have $\hat{q}\left(
A\right)  $ $>\hat{q}\left(  B\right)  .$\\ \\
\begin{minipage}[c]{0.5\linewidth}
    \begin{figure}[H]
     \centering
     \includegraphics[width=0.95\textwidth]{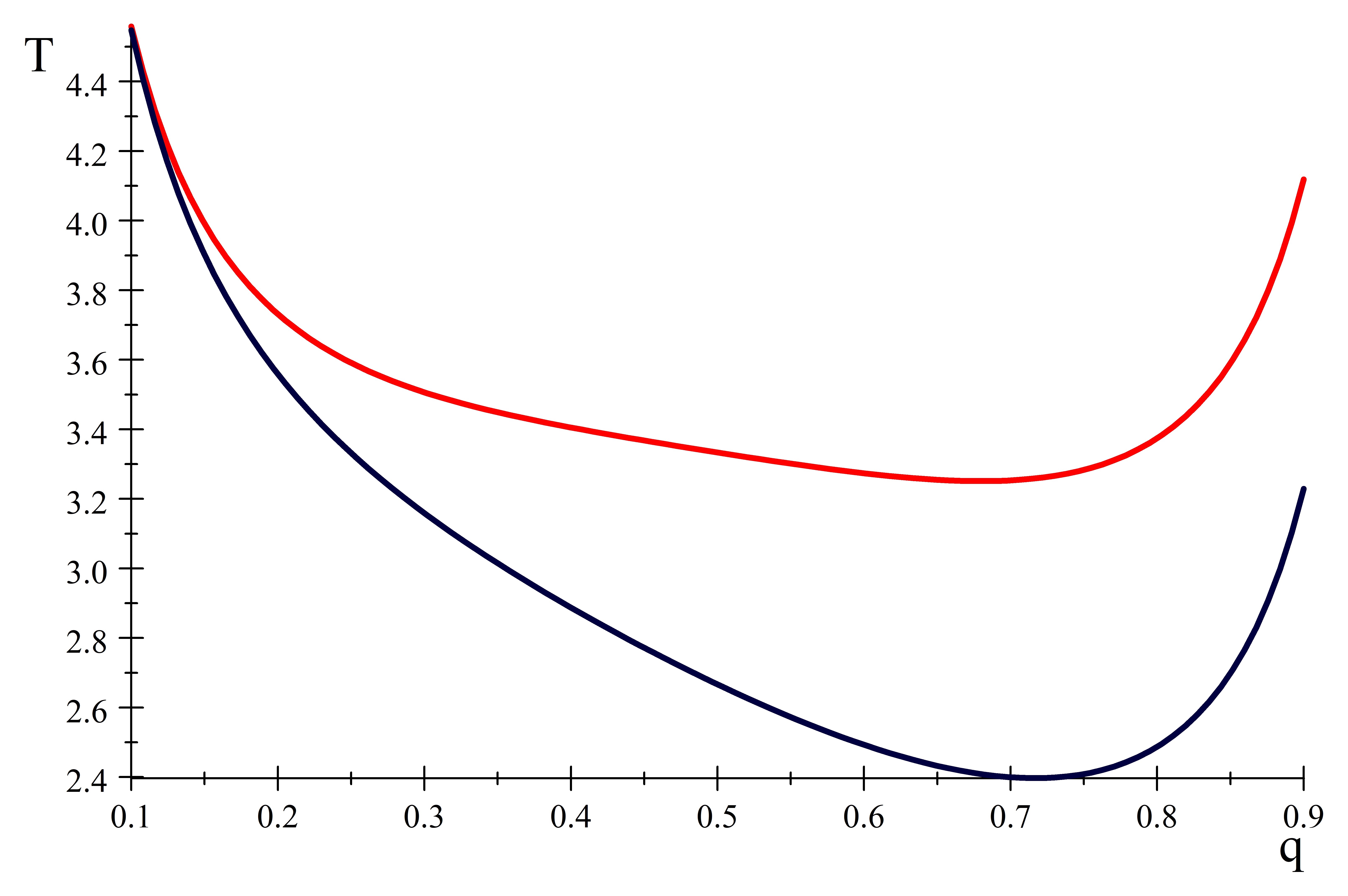}
     \caption{$T_A$ (top), $T_B,$ $p=3/4.$}
    \end{figure}
\end{minipage}
\begin{minipage}[c]{0.5\linewidth}
    \begin{figure}[H]
     \centering
     \includegraphics[width=0.95\textwidth]{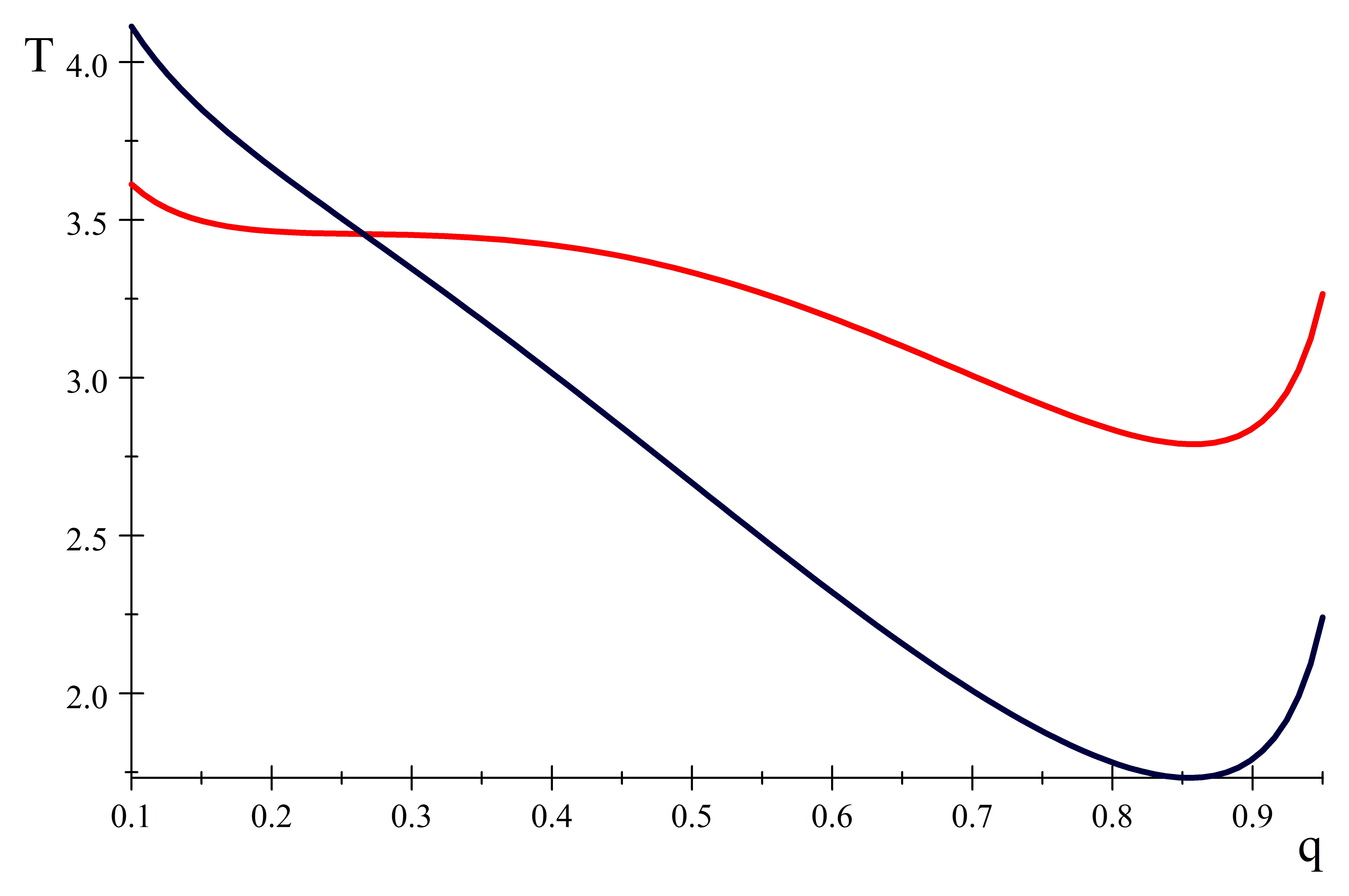}
     \caption{$T_A$ (top right), $T_B,$ $p=0.96.$}
    \end{figure}
\end{minipage}\\ \\ \\
Note that $a_{3}$ has the factor $\left(  1-q\right)  $ in its denominator and
hence goes to infinity when $q$ goes to $1;$ similarly $a_{4}$ has the factor
$q$ in its denominator and hence goes to infinity when $q$ goes to $0. $ This
observation can also be based on the cycle $ABAB...$ which will go on for a
long time in these cases. A more generalizable argument is based on the
observation that $C$ can be reached only by traversing one of the arcs $AC$ or
$BC.$ If both of these are directed (by pointers) at $A$ and $B$ away from $C$
(the pointer vector we called $d^{3})$ then if $q=1$ the Home node $H=C$
cannot be reached. A similar argument works for $d^{4}$ with $q=0.$ So this is
a good place to state the following easy generalization.

\begin{theorem}
Fix $Q,I,H,p\in\left(  0,1\right)  .$ and let $T\left(  q\right)  $ denote the
expected time to reach $H$ from $I$ with trust $q.$ Then we have $T\left(
q\right)  \rightarrow\infty$ as $q\rightarrow0$ or $1.$ Hence $T\left(
q\right)  $ has an interior minimum $\hat{q}\in\left(  0,1\right)  .$
\end{theorem}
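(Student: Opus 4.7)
My plan is to show that each of the two endpoint divergences, $T(q) \to \infty$ as $q \to 0^{+}$ and as $q \to 1^{-}$, follows from exhibiting a single direction vector $d^{*} \in \mathcal{D}$ with $\mu(d^{*}) > 0$ whose conditional expected travel time $T^{d^{*}}(q)$ already diverges. Since (\ref{T sum of Td}) represents $T(q)$ as a nonnegative linear combination of the $T^{d}(q)$ with $q$-independent coefficients $\mu(d)$, a single such term suffices. I first note that, because $p \in (0,1)$ and every branch node has degree $k \ge 2$, each arc at a branch node is selected with probability either $p$ (if it is on a shortest path) or $(1-p)/(k-1)$ (otherwise), both positive. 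Hence $\mu(d) > 0$ for every $d \in \mathcal{D}$.

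For $q \to 1^{-}$ I mimic the triangle argument given just before the theorem: construct $d^{*}$ so that the deterministic ``always follow the pointer'' dynamics starting from $I$ is trapped in a cycle disjoint from $H$. Concretely, pick a branch node $v$ reachable from $I$ (take $v = I$ if $I$ itself is a branch node) and a neighbor $w \ne H$ of $v$; set $d^{*}(v) = w$, and set $d^{*}(w) = v$ if $w$ is itself a branch node (otherwise $w$ is a leaf and reflects to $v$ automatically), producing a $2$-cycle $\{v, w\}$ avoiding $H$. Fill in $d^{*}$ on the remaining branch nodes along some path from $I$ to $v$ so that the deterministic trajectory from $I$ reaches the cycle. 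For $q < 1$, the chain escapes the cycle only when it fails to follow a pointer, an event of probability at most $1-q$ per visit, so the expected escape time (and hence $T^{d^{*}}(q)$) is bounded below by a positive constant times $1/(1-q) \to \infty$.

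The $q \to 0^{+}$ limit uses the mirror construction: let $d^{**}$ have $d^{**}(v) = H$ at every branch neighbor $v$ of $H$, so that the ``avoid the pointer'' rule applied at $q = 0$ never traverses an arc into $H$ from a branch node. At each visit to such a $v$, the arc to $H$ is taken with probability exactly $q$, so the expected number of visits to $v$ before entering $H$ grows like $1/q$, yielding $T^{d^{**}}(q) \to \infty$. (Leaves adjacent to $H$ reflect directly into $H$ but are reached only through a branch node, so the bottleneck persists; a minor adjustment to $d^{**}$ along the path the chain must take through a branch node handles this edge case.)

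The step I expect to be most delicate is verifying, in each construction, that the chain actually spends expected time of the claimed order inside the constructed trap rather than escaping with constant probability independent of $q$; this is where the per-visit escape probability bound must be made explicit and combined with a positive probability of ever reaching the trap from $I$. Once both endpoint divergences are in place, the second assertion is automatic: each $T^{d}(q)$ is a rational function of $q$ obtained from the absorbing Markov chain with generator analytic in $q$, so $T(q)$ is continuous on $(0,1)$; together with $T(q) \to \infty$ at both endpoints this forces the infimum to be attained at some interior point $\hat{q} \in (0,1)$ by compactness of sublevel sets.
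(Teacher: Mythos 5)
Your proposal is correct and shares the paper's overall skeleton: observe that $\mu(d)>0$ for every direction vector when $p\in(0,1)$, exhibit for each endpoint a single direction vector whose conditional hitting time $T^{d}(q)$ diverges, and conclude via continuity of the absorbing-chain hitting times in $q$. Your $q\to 0^{+}$ witness (every neighbour of $H$ points at $H$, so entering $H$ requires obeying a pointer, giving a $1/q$ lower bound) is exactly the paper's argument. Where you diverge is the $q\to 1^{-}$ endpoint: the paper simply mirrors the first construction, taking a direction vector in which \emph{no} vertex adjacent to $H$ points at $H$, so that entering $H$ requires disobeying a pointer and the $1/(1-q)$ bound is again localized entirely at the neighbours of $H$. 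Your two-cycle trap also works, but it is more laborious: you must route the deterministic pointer-following trajectory from $I$ into the trap without passing through $H$ first (this is where the standing assumption that $H$ is not a cut node quietly enters), and you must separately argue that the trap is reached with probability bounded away from zero as $q\to 1$. These are not gaps --- taking $v=I$ or the first branch node reached from $I$ disposes of them --- but they are complications the paper's symmetric choice of witness avoids entirely, since it needs no reachability argument at all: any sample path that ever reaches $H$ must, on its last step, have disobeyed a pointer. The final step (continuity plus divergence at both endpoints forces an interior minimum) is the same in both treatments.
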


\begin{proof}
Suppose the direction vector $d^{\ast}$ is such that at every vertex adjacent
to $H$, it points to $H.$ When reaching such a vertex (or if starting there),
one has to follow the pointer eventually to reach $H$. The expected number of
times this takes is $1/q,$ so the expected time $T$ is at least $L/q$, where
$L$ is the smallest edge length. So $T\geq\mu\left(  d^{\ast}\right)  ~L/q,$
which goes to $\infty$ as $q\rightarrow0.$ For $q\rightarrow1$ as similar
result holds for any direction vector $d^{\ast\ast}$ in which at every vertex
adjacent to $H,$ it doesn't point to $H.$ Note that for any fixed $p\in\left(
0,1\right)  $ and direction vector $d$ , $T^{d}\left(  q\right)  $ is a family
of Markov chains with the same absorbing state $H$ and hence the hitting time
of $H$ is continuous in $q.$ It follows that $T$ has an interior minimum
$\hat{q}\in\left(  0,1\right)  .$
\end{proof}

\subsection{The circle with spike network}

We now apply the slow method to the circle-with-spike network shown in Figure
5. This graph has multiple edges but that does not give us any problems. There
is six direction vectors (two choices at $A,$ three at $X$). We apply the same
`slow method' as for the previous example, leaving out the details.
    \begin{figure}[H]
     \centering
     \includegraphics[width=0.85\textwidth]{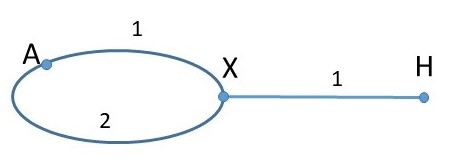}
     \caption{Circle-with-spike networks.}
    \end{figure}
We note that the shortest path from $A$ to $H$ goes along the arc of length
$1$ and from $X$ it goes along the arc to $H$. First consider the `counting
searcher' version mentioned in the Introduction, where the trust probability
$q_{j}$ is allowed to depend on the degree $j$ of the current node. Let
$r=q_{2}$ denote the trust probability at $A$ and $s=q_{3}$ denote the trust
probability at $X.$ Using the same simultaneous equation method as in the last
subsection, and averaging over the six direction vectors, we find
\begin{gather}
T\left(  A,X\right)  =1+p+\left(  1-2p\right)  r~\text{and also }T\left(
X,H\right)  \text{ is given by}\label{a}\\
\frac{2p^{2}(-1+2r)(-1+3s)+s(7+3s+2r(1+s))-p(-5+13s-2s^{2}+2r(-1+5s+2s^{2}%
)))}{2(1-s)s}\nonumber
\end{gather}
From $A$, since both arcs lead to $X$, one should take the one most likely to
be the short arc. So the optimum $r=q_{2}$ is 1 when $p>1/2$ and 0 when
$p<1/2,$ which can also be seen from (\ref{a}). It is easily calculated that
for $p=3/4$ the counting agent problem for $T\left(  X,H\right)  $ is
minimized at about $5.056$ with $r=q_{2}=1$ (this is true more generally for
$p>1/2$) and $s=q_{3}\simeq0.55051,$ starting at either node. Later we will
show how the counting problem can be solved more easily by considering an
associated star network and applying the theory for stars developed in the
next section. For the original (non-counting searcher) the solution starting
from $X$ has $r=s=q$ minimized at $5.38$, with $q\simeq0.56.$ Starting from
$A,$ the time $T\left(  A,H\right)  $ to $H$ is minimized at $6.85$, with
$q\simeq$ $0.57108.$

\subsection{A tree with two branch nodes}

Consider the tree network drawn in Figure 6, which has two leaf nodes $1$ and
$2,$ two branch nodes $A$ and $B,$ and a destination node $H.$ All arcs have
unit length. The pointing directions are indicated for clarity (for example to
A,1 or H at node B). For the basic problem (single trust probability for all
nodes), we let $q_{A}$ denote the trust everywhere when starting at $I=A,$ so
$\hat{q}_{A}$ minimizes $T_{A}=T\left(  A,H\right)  $ and similarly for
$q_{B}.$ For the counting searcher problem, let $q_{2}$ and $q_{3}$ denote the
trust at $A$ (degree 2) and at $B$ (degree 3) when these are allowed to be
different.
    \begin{figure}[H]
     \centering
     \includegraphics[width=0.85\textwidth]{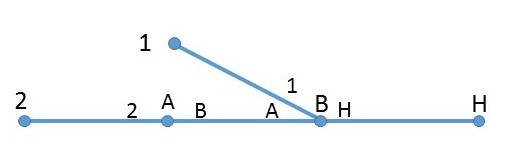}
     \caption{A simple tree with two branch nodes.}
    \end{figure}
We begin by considering the problem with a `counting' agent' taking $r=q_{2}$
(trust at $A)$ and $s=q_{3}$ (trust at $B).$ For the `slow method' of Section
3.1 we have to first consider the six direction vectors in $\left\{
2,B\right\}  \times\left\{  A,1,H\right\}  ,$ the two directions at $A$ and
the three at $B.$ For the direction vector $d^{\ast}=\left(  B,1\right)  ,$
the travel times $T_{A}^{d^{\ast}}=T^{d^{\ast}}\left(  A,H\right)  $ and
$T_{B}^{d^{\ast}}=T^{d^{\ast}}\left(  B,H\right)  $ satisfy
\begin{align*}
T_{A}^{d^{\ast}}  & =r\left(  1+T_{B}^{d^{\ast}}\right)  +\left(  1-r\right)
\left(  2+T_{A}^{d^{\ast}}\right)  ,\text{ }T_{B}^{d^{\ast}}=\frac{1-s}%
{2}\left(  1+\left(  1+T_{A}^{d^{\ast}}\right)  \right)  +s\left(
2+T_{B}^{d^{\ast}}\right)  ,~\text{so}\\
T_{A}^{d^{\ast}}  & =\frac{-4s+4rs+4}{r-rs}\,\text{and }T_{B}^{d^{\ast}}%
=\frac{r-2s+3rs+2}{r-rs}.
\end{align*}

Using analogous methods for the five other direction vectors $d,$ and then
averaging them with weights $\mu\left(  d\right)  ,$ we obtain the formulae
for $T\left(  A,H\right)  $ (top line) and $T\left(  B,H\right)  $ (bottom
line) as follows%

\begin{align*}
& \frac{\left(  2r+3s-6rs-1\right)  p^{2}+\left(  r^{2}-3r^{2}s-2rs^{2}%
+12rs-2r+s^{2}-3s\right)  \allowbreak p+\left(  r^{2}s^{2}+r^{2}s-4rs\right)
}{rs\left(  1-r\right)  \left(  s-1\right)  }\allowbreak\\
& \frac{\left(  6rs-3s-2r+1\right)  p^{2}+\left(  3r^{2}s-r^{2}-2rs^{2}%
-8rs+2r+s^{2}+s\right)  \allowbreak p+\left(  rs^{2}-2r^{2}s+3rs\right)
}{rs\left(  r-1\right)  \left(  s-1\right)  }%
\end{align*}

For the original non-counting problem we set $q=r=s$ to minimize
$T_{A}=T\left(  A,H\right)  $ and then $T_{B}=T\left(  B,H\right)  $ over $q.$
In Figure 7 we fix reliability at $p=3/4$ and plot the expected times to reach
$H$ from $A$ (lowered by $2.78$ to fit in picture) and from $B.$ It can be
seen that the optimal trust (at all nodes) when starting at $A$ is
approximately .59, which is higher than the optimal trust of about .57 when
starting at $B.$ We mention this, because later we shall show that for the
line graph the optimal trust probability does not depend on the starting node,
there is a uniformly optimal trust. At these respective optimal trusts, we
have $\hat{T}\left(  A,H\right)  \simeq\allowbreak8.\,\allowbreak05$ and
$\hat{T}\left(  B,H\right)  \simeq\allowbreak5.28.\,\allowbreak$ \\
    \begin{figure}[H]
     \centering
     \includegraphics[width=1\textwidth]{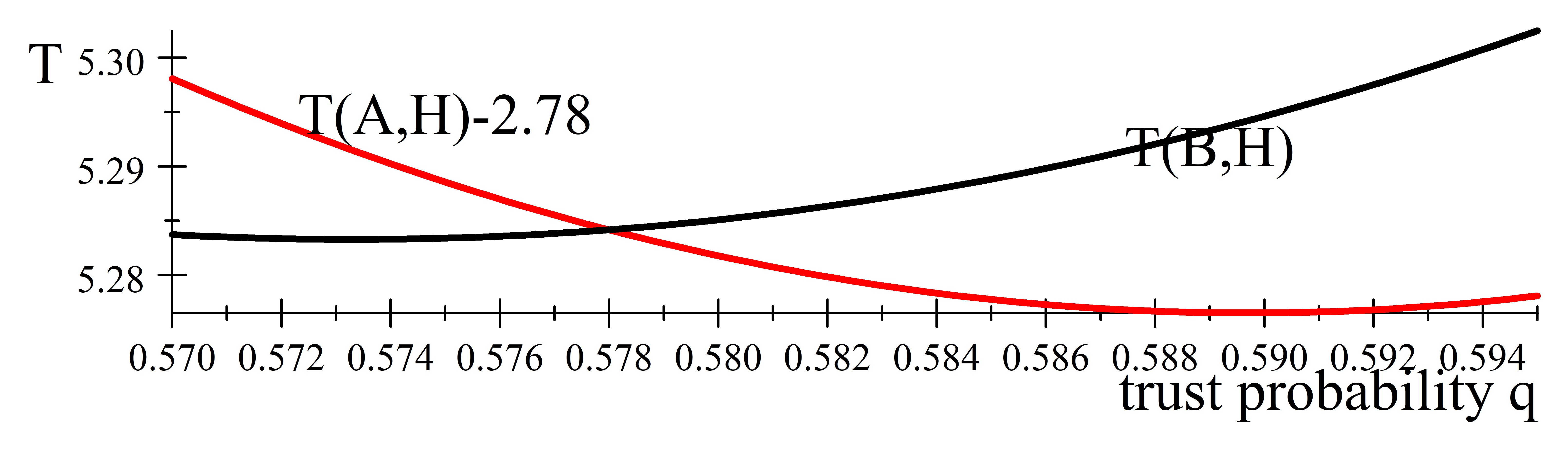}
     \caption{Plots of time to $H$ from A,B for $p=3/4.$}
    \end{figure}
    \newpage
More generally, we set $r=s=q$ and calculate%

\begin{align*}
\frac{\partial T_{A}}{\partial q}  & =\frac{\allowbreak\left(  3-5p\right)
q^{4}+\left(  23p-12p^{2}-7\right)  q^{3}+\left(  15p^{2}-15p\right)
\allowbreak q^{2}+\left(  5p-9p^{2}\right)  q+2p^{2}}{(-1+q)^{3}~q^{3}},\\
\frac{\partial T_{B}}{\partial q}  & =\frac{\left(  1-p\right)  q^{4}+\left(
15p-12p^{2}-5\right)  q^{3}+\left(  15p^{2}-9p\right)  \allowbreak
q^{2}+\left(  3p-9p^{2}\right)  q+2p^{2}}{(-1+q)^{3}q^{3}}.
\end{align*}
Setting fourth degree polynomials in the numerators to zero, we obtain
implicit functions for $\hat{q}_{A},\hat{q}_{B}$ as functions of $p,$ which we
plot as the two middle curves in Figure 8. We see that $\hat{q}_{A}>\hat
{q}_{B}$ for all $p,$ $0<p<1.$

Finally, we consider the counting agent problem, where we can jointly minimize
$T_{A}$ and $T_{B}$ with $r=q_{2}$ and $s=q_{3}$ ($A$ has degree 2, $B$ has
degree 3). For our comparison base $p=3/4,$ the optimal values of trust are as
follows: when the search agent can count the degree of a node, he can reach
$H$ in expected time about $7.96$ from $A$ and $5.23$ from $B;$ in both cases
adopting trust $\hat{q}_{2}=\frac{3}{2}-\frac{1}{2}\sqrt{3}\allowbreak
\simeq\allowbreak0.634\,$when at $A$ and $\hat{q}_{3}=3-\sqrt{6}%
\allowbreak\simeq0.551$ when at $B.$ In the main case, where he cannot count
and must trust equally at all branch nodes, he trusts with probability
$q_{A}\simeq\allowbreak0.590$ at both nodes when starting at $A,$ reaching $H$
in expected time $8.057$. When starting from $B,$ he reaches $H$ in expected
time 5.283, trusting with probability $\bar{q}_{B}\simeq$ 0.573 at both nodes.
The four trust probabilities for this network are shown below in Figure 8 The
important thing to note, probably with general applicability, is that that
when counting degree the searcher can use more extreme trust values, but when
trust has to be the same at all nodes, less extreme trust values must be
adopted.
    \begin{figure}[H]
     \centering
     \includegraphics[width=0.95\textwidth]{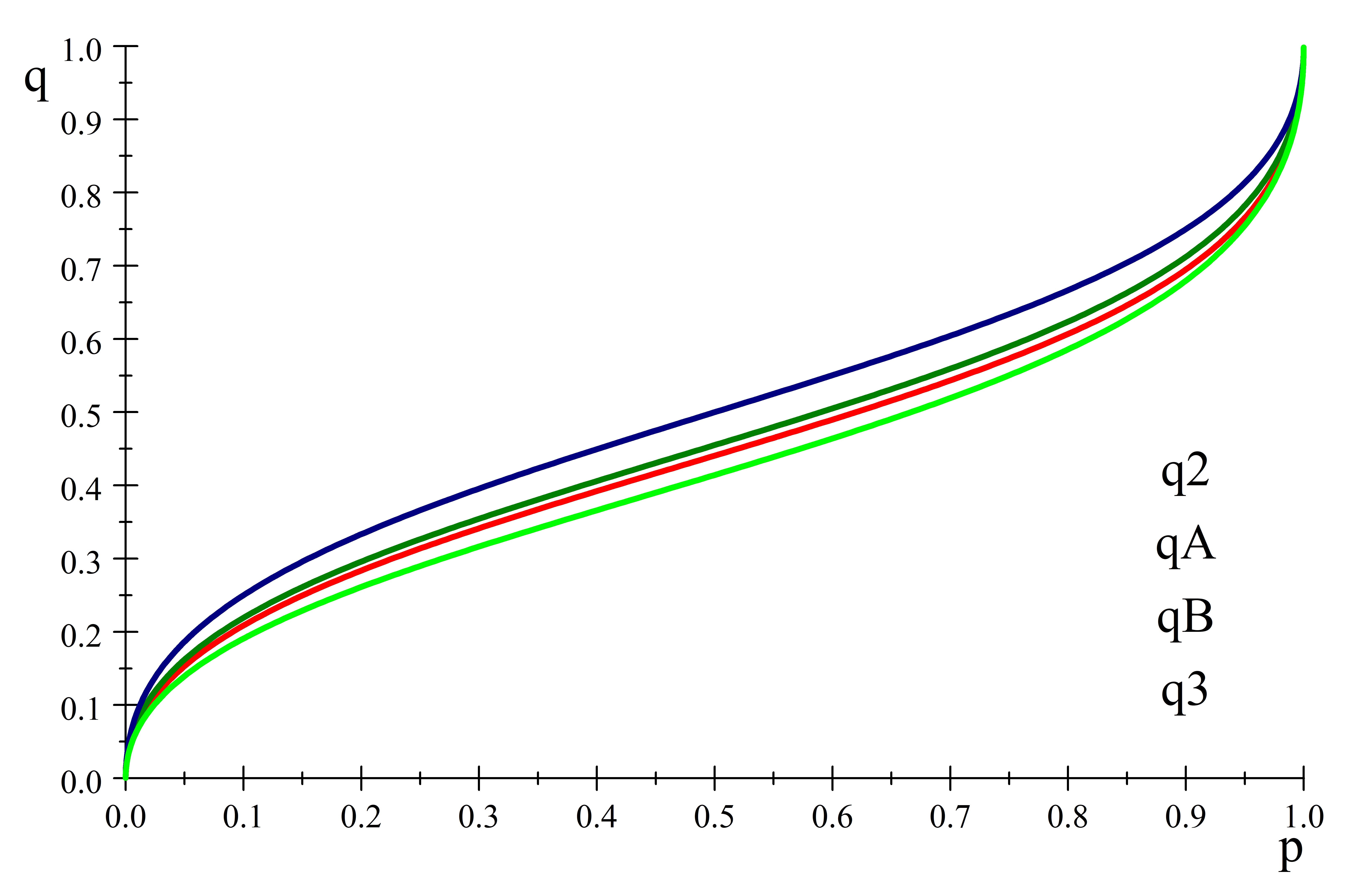}
     \caption{Plots of optimal trusts $q_2,q_A,q_B,q_3.$}
    \end{figure}
\section{ Star Networks}

We now consider a star network $Q^{n}$ where one of the $n$ rays (leaf arcs)
leads to the home node $H$ and the start node $I$ is the central node. It
turns out that the optimal trust probability $\hat{q}=\hat{q}_{n}$ (for the
single branch node $I$) depends only on $p$ and the degree $n$ of the central
node. The lengths of the rays do not matter, though of course they affect the
optimal travel time. Our analysis of the star will have implications for other
networks, because locally every node is a star.

\begin{theorem}
\label{star}Let $Q$ be a star network with a single branch node $I$ (the
center node) of degree $n.$ Assume that the home node $H$ is one of the leaf
nodes, with a leaf arc of length $c.$ The other $n-1$ rays (arcs)
$i=1,\dots,n-1,$ have lengths denoted by $\alpha_{i},$ whose sum is denoted
$\alpha=\sum\limits_{i=1}^{n-1}\alpha_{i}.$ The expected time $T$ to get to
$H$ from $I$ is given by%
\begin{equation}
T=c+\frac{\allowbreak\left(  2p-4q+2q^{2}+2nq-2npq\right)  }{q\left(
1-q\right)  \left(  n-1\right)  }\alpha,\label{Tstar}%
\end{equation}
which is minimized by taking $q=\hat{q}_{n}$ to be%
\begin{align}
\hat{q}_{n}  & =\bar{q}_{n}\left(  p\right)  \equiv\frac{p-\sqrt{n-1}%
\sqrt{p\left(  1-p\right)  }}{1-n\left(  1-p\right)  },\text{ for }p\neq
\frac{n-1}{n}\text{ and}\label{qn bar star}\\
\hat{q}_{n}  & =\bar{q}_{n}\left(  1/2\right)  \equiv\frac{1}{2},\text{ for
}p=\frac{n-1}{n},
\end{align}
independent of the lengths of the rays.
\end{theorem}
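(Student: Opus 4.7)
The plan is in two parts: derive the formula (\ref{Tstar}) by first-step analysis at the unique branch node $I$, then optimize the resulting expression in $q$.

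For the formula, I would condition on the (fixed) pointer direction at $I$, which takes $n$ possible values. Let $T_H$ be the expected travel time from $I$ to $H$ given that the pointer is at the $H$-arc, and let $T_k$ ($k=1,\dots,n-1$) be the corresponding expected time given that the pointer is at the non-$H$ arc of length $\alpha_k$. Because the pointer is fixed for the whole journey, the searcher's successive choices at $I$ are i.i.d., so first-step analysis from $I$ yields one linear equation for each quantity. Solving them gives
\[
T_H \;=\; c + \frac{2(1-q)\alpha}{q(n-1)}, \qquad T_k \;=\; c + 2(\alpha-\alpha_k) + \frac{2q(n-1)\alpha_k}{1-q}.
\]
The total expected time is then the $\mu$-average
\[
T \;=\; p\,T_H + \frac{1-p}{n-1}\sum_{k=1}^{n-1} T_k.
\]
The individual $\alpha_k$'s cancel upon summation since $\sum_k\alpha_k=\alpha$ and $\sum_k(\alpha-\alpha_k)=(n-2)\alpha$, leaving an expression in $\alpha$ alone. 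Placing everything over the common denominator $q(1-q)(n-1)$ and simplifying yields (\ref{Tstar}).

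For the minimization I would differentiate the right-hand side of (\ref{Tstar}) in $q$; the constant $c$ and the positive factor $\alpha$ play no role. Setting $\partial T/\partial q = 0$ and clearing the factor $[q(1-q)]^2$ reduces the first-order condition to the quadratic
\[
(n-1-np)\,q^2 + 2pq - p \;=\; 0.
\]
When $p\neq (n-1)/n$ the quadratic formula delivers the root claimed in (\ref{qn bar star}); using the identity $1-n(1-p) = -(n-1-np)$, one recognizes the stated expression as the root that lies in $(0,1)$. When $p=(n-1)/n$ the equation degenerates to $2pq = p$, yielding $\hat q_n = 1/2$, which is also the limit of $\bar q_n(p)$ at this point. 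By Theorem 1, $T(q)\to\infty$ as $q\to 0$ or $q\to 1$, so the unique interior critical point must be the global minimum on $(0,1)$.

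The main obstacle I anticipate is purely bookkeeping: the coefficient $n-1-np$ changes sign as $p$ crosses $(n-1)/n$, and one must verify that the ``$+$'' branch of the quadratic always yields a value in $(0,1)$ and agrees with the stated formula on both sides of this transition (and matches the linear case in the limit). The first-step analysis itself is routine, while the conceptually pleasing step is the cancellation of the individual $\alpha_k$'s after $\mu$-averaging, which explains why the optimal trust depends only on $n$ and $p$ and not on the lengths of the rays.
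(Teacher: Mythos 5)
Your proposal is correct and follows essentially the same route as the paper: first-step analysis conditioned on the single fixed pointer direction at $I$, $\mu$-averaging (under which the individual $\alpha_k$ cancel, exactly as you note), and then a first-order condition in $q$ whose positive root in $(0,1)$ is $\bar q_n(p)$. One small point in your favor: the quadratic you derive, $(n-1-np)q^2+2pq-p=0$, is the correct first-order condition for minimizing (\ref{Tstar}) (the condition displayed in the paper's proof, which sets the numerator of (\ref{Tstar}) itself to zero, appears to be a misprint, since it is the vanishing of $N'D-ND'$ that produces your quadratic and hence the stated formula (\ref{qn bar star})).
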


\begin{proof}
\bigskip Since there is a single branch node $I,$ the direction vector has a
single element which we call just $d.$ If $d=h$ (points to $H),$ we calculate
the time $T^{h}$ to reach $H,$ using
\begin{align}
T^{h}  & =q\left(  c\right)  +\frac{1-q}{n-1}\sum\limits_{i=1}^{n-1}\left(
2\alpha_{i}+T^{h}\right) \label{TH}\\
& =qc+\left(  1-q\right)  \left(  T^{h}+2\alpha/\left(  n-1\right)  \right)
,\text{ or}\nonumber\\
T^{h}  & =\frac{1}{q}\left(  cq-2\frac{\alpha}{n-1}\left(  q-1\right)
\right)  .\nonumber
\end{align}
\bigskip If $d=i$ points along one of the other rays $i=1,\dots,n-1\,,$ then
the time $T^{d}=T^{i}$ to reach $H$ satisfies the equation%
\begin{align}
T^{i}  & =q\left(  2\alpha_{i}+T_{i}\right)  +\left(  1-q\right)  \left(
1/\left(  n-1\right)  \right)  \left(  c+\sum\limits_{j\neq i}\left(
T^{i}+2\alpha_{j}\right)  \right)  ,\text{ }\label{T_i}\\
& =q\left(  2\alpha_{i}+T^{i}\right)  +\left(  \left(  1-q\right)  /\left(
n-1\right)  \right)  \left(  c+\left(  n-2\right)  T^{i}+2\left(
\alpha-\alpha_{i}\right)  \right)  ,~\text{or}\nonumber\\
T^{i}  & =\frac{2\alpha-2\alpha_{i}+c-2\alpha q-cq+2\alpha_{i}nq}%
{1-q}.\nonumber
\end{align}
So the overall time to reach $H$ is given by $T=pT^{h}\sum\limits_{i=1}%
^{n-1}\frac{\left(  1-p\right)  }{n-1}T^{i},$ $\allowbreak$which simplifies to
(\ref{Tstar}), as claimed. To find the optimal trust, it is enough to solve
the first order condition%
\[
2p-4q+2q^{2}+2nq-2npq=0\text{ }%
\]
which gives the optimal trust $\hat{q}=\bar{q}_{n}$ of (\ref{qn bar star}).In
the case where the denominator of (\ref{qn bar star}) is zero, the limiting
value of $1/2$ is obtained by L'Hospital's rule.
\end{proof}

Some values of $\hat{q}\left(  n,p\right)  $ for our standard reliability of
$p=3/4$ are given in Table 1. Note that the value for $n=3$ (given to more
places) is the same as we found using the slow method for the degree three
node $X$ in the circle-with-spike graph of Figure 5, which is not a star. An
exact analysis of the circle-with-spike network will be given in the next section.%

\begin{gather*}%
\begin{tabular}
[c]{|l|l|l|l|l|l|l|}\hline
$n$ & 2 & 3 & 4 & 5 & 6 & 7\\\hline
$\bar{q}\left(  n,p\right)  $ & 0.634 & 0.55051 & 0.500 & 0.464 & 0.436 &
0.414\\\hline
\end{tabular}
\\
\text{Table 1. Trust values }\hat{q}_{n}=\text{ }\bar{q}\left(  n,p\right)
,\text{ }p=3/4,n=2,\dots,7.
\end{gather*}

\section{Graphs With Bridges}

In the Star Theorem (Theorem \ref{star}), a searcher leaving the central node
$I$ via an arc not leading to $H$ will come back immediately from the
corresponding leaf node to $I.$ It turns out that the same method of analysis
works as long as when leaving $I$ by such an arc, the searcher returns to $I$
\textit{before reaching} $H.$ This property can be ensured by specifying that
$IH$ is a bridge arc. Note that the circle-with-spike network of Figure 5 has
this property for $I=X.$ A more general version of the Star Theorem, which for
example applies immediately to that network, can then be stated as follows.

\begin{theorem}
[Bridge Theorem]\label{localstar}Let $XH$ be a bridge (disconnecting) arc of
length $c$ of a network $Q,$ with the degree of $X$ equal to $n.$ Assume that
the reliability $p$ and the trust probabilities $q\left(  j\right)  $ for all
branch nodes $j$ other than $X$ are fixed arbitrarily. Label the arcs out of
$X$ other than $XH$ as $i=1,2,\dots,n-1.$ Let $\beta_{i}$ denote the expected
time to return to $X$ when leaving $X$ via arc $i.$ Then the expected time
$T\left(  X,H\right)  $ to reach $H$ from $X$ is the same as for the star at
$X$ with rays of length $\beta_{i}/2.$ Regardless of the values of $q\left(
j\right)  $ and the arc lengths $\beta_{i},$ the value of $q=q\left(
X\right)  $ which minimizes $T\left(  X,H\right)  $ is given by $q=$ $\bar
{q}_{n}\left(  p\right)  $ as in (\ref{qn bar star}) and the least expected
time to reach $H$ is given by
\begin{align}
\hat{T}_{H}  & =\hat{T}\left(  X,H\right)  =c+M\beta,\text{ where }\beta
=\sum\limits_{i=1}^{n-1}\beta_{i}\text{ and}\label{T=c+MA}\\
M  & =M\left(  n,p,q\right)  =\frac{\allowbreak\left(  p-2q+q^{2}%
+nq-npq\right)  }{q\left(  1-q\right)  \left(  n-1\right)  }.\label{M(n,p,q)}%
\end{align}

\end{theorem}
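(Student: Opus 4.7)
The plan is to reduce the theorem directly to the Star Theorem (Theorem \ref{star}) by exploiting the defining property of the bridge arc $XH$: every walk from $X$ that reaches $H$ must use $XH$. Consequently, if the searcher leaves $X$ along any arc $i \neq XH$, she is forced back to $X$ before she can arrive at $H$. With the trust probabilities $q(j)$ at all branch nodes $j \neq X$ held fixed by hypothesis, and with pointer errors independent across nodes, the expected excursion duration $\beta_i$ is a well-defined constant that does not depend on the trust $q=q(X)$ chosen at $X$ nor on the pointer value at $X$ (the excursion does not revisit any decision at $X$).

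The next step is to condition on the direction $d$ assigned at $X$ and set up recurrences that mirror (\ref{TH}) and (\ref{T_i}). If $d$ points toward $H$, the argument leading to (\ref{TH}) carries over with the ``out-and-back'' cost $2\alpha_i$ of ray $i$ in the star replaced by the excursion cost $\beta_i$, giving
\begin{equation*}
T^{h}(X,H) = q\,c + \frac{1-q}{n-1}\sum_{i=1}^{n-1}\big(\beta_i + T^{h}(X,H)\big).
\end{equation*}
If $d$ points along arc $i$, the analogue of (\ref{T_i}) reads
\begin{equation*}
T^{i}(X,H) = q\big(\beta_i + T^{i}(X,H)\big) + \frac{1-q}{n-1}\Big(c + \sum_{j\neq i}\big(\beta_j + T^{i}(X,H)\big)\Big).
\end{equation*}
These are exactly the star equations with the substitution $2\alpha_i \mapsto \beta_i$. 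Averaging the $T^d(X,H)$ with weights $p$ and $(1-p)/(n-1)$ therefore yields an expression of precisely the same algebraic form as (\ref{Tstar}), with $\alpha$ replaced by $\beta/2 = \sum \beta_i / 2$. Since the numerator in (\ref{Tstar}) is $2(p - 2q + q^2 + nq - npq)$, a factor of $2$ cancels and we obtain $T(X,H) = c + M\beta$ with $M$ as in (\ref{M(n,p,q)}).

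Because $T(X,H) - c$ is proportional to $\beta$ with proportionality $M$ depending only on $n,p,q$, the first-order condition $\partial T/\partial q = 0$ is algebraically identical to the one in the Star Theorem, so its root is again $\bar{q}_n(p)$, independent of the $\beta_i$ and of all other trust values. The main delicate point in the plan is the justification that the $\beta_i$ are genuine constants that factor cleanly out of the expectations: this uses the bridge property (which separates the non-$H$ side of $X$ from the arc $XH$ as disjoint subnetworks) together with the independence of pointer errors across nodes, so the excursion process on arc $i$ and the decision process at $X$ are probabilistically independent. Once this structural observation is in hand, the remainder of the proof is a line-for-line repetition of the star calculation.
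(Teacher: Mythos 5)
Your proposal is correct and follows essentially the same route as the paper's own proof: both reduce the Bridge Theorem to the Star Theorem by substituting the excursion cost $\beta_i$ for the out-and-back cost $2\alpha_i$ in the recurrences (\ref{TH}) and (\ref{T_i}), observe that the resulting expression is (\ref{Tstar}) with $\alpha$ replaced by $\beta/2$, and conclude that the first-order condition and hence the minimizer $\bar{q}_n(p)$ are unchanged. Your additional remark justifying why the $\beta_i$ factor out of the expectation (the bridge property plus independence of pointers, with the conditional hitting time being affine in the conditional excursion means) is a point the paper leaves implicit, so it strengthens rather than alters the argument.
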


\begin{proof}
The same derivation used for Theorem \ref{star} holds in this situation, with
$\alpha_{i}$ replaced by $\beta_{i}/2.$ If $d$ points towards $H,$ we have%
\[
T=qc+\frac{\left(  1-q\right)  }{n}\sum\limits_{i=1}^{n-1}\left(  T+\beta
_{i}\right)  ,
\]
which is the same as (\ref{TH}) with $\beta_{i}$ replacing $2\alpha_{i}.$ The
same replacement holds for the equation (\ref{T_i}) giving $T^{i},$ so the
rest of the analysis follows in an identical fashion. Thus $M$ is half the
constant given in (\ref{Tstar}).
\end{proof}

It is worth noting that the Star Theorem is a special case of the Bridge
Theorem with returns times twice the lengths of the leaf arcs. We can use the
Bridge Theorem to give a simpler solution of the circle-with-spike network of
Figure 5. Optimal trust at $I=X$ (namely $\bar{q}_{3}\simeq0.55051$) now
follows from the Bridge Theorem as $\bar{q}_{3}.$ Since there is a unique node
sequence $AXH$ to the home node, we have $T\left(  A,H\right)  =T\left(
A,X\right)  +T\left(  X,H\right)  $ (see Lemma 4 below). Suppose $p>1/2$ for
simplicity. Then a simple argument given earlier shows that the optimal trust
at node $A$ is $1,$ which implies that the expected return time to $X$ when
leaving via the arc of length $\lambda=1,2$ is given by $\beta_{\lambda
}=\lambda+p\left(  1\right)  +\left(  1-p\right)  (2)=\allowbreak\lambda-p+2$
with sum $\beta=7-2p.$ We can also obtain the time $\hat{T}\left(  X,H\right)
$ evaluated as $5.056$ in Section 3.2 by the slow method using ( \ref{T=c+MA})
as
\[
\hat{T}\left(  X,H\right)  =1+\frac{\allowbreak\left(  p+\bar{q}_{3}+\bar
{q}_{3}^{2}-3p\bar{q}_{3}\right)  }{2~\bar{q}_{3}\left(  1-\bar{q}_{3}\right)
}\left(  7-2p\right)  \simeq5.056,\text{ for }p=3/4.
\]
The nice thing about the Bridge Theorem is that the optimal trust probability
at $X$ only depends on $p$ and $n$. Therefore, if there are multiple nodes
connected to a bridge on different parts of the network of the same degree,
they have the same optimal trust probability.

We conclude this section with a formalization of the claim about $T\left(
A,H\right)  =T\left(  A,X\right)  +T\left(  X,H\right)  $ mentioned in the
previous paragraph.

\begin{lemma}
Let $A,B,C$ be nodes of $Q$ such that every path from $A$ to $C$ passes
through $B$. Suppose reliability $p$ is fixed as well as the trust
probabilities in $\left(  0,1\right)  $ at every node of $Q.$ Then%
\begin{equation}
T\left(  A,C\right)  =T\left(  A,B\right)  +T\left(  B,C\right)
.\label{T(A,C)sum}%
\end{equation}

\end{lemma}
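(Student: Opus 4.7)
The plan is to fix a single direction vector $d \in \mathcal{D}$, establish the additive identity $T^d(A,C) = T^d(A,B) + T^d(B,C)$ for the induced Markov chain via the strong Markov property, and then average over $d$ using the decomposition (\ref{T sum of Td}).

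First, I would observe that for any fixed $d$ and fixed trust probabilities, the searcher's position evolves as a time-homogeneous Markov chain on $\mathcal{N}$: the transition out of the current branch node depends only on the node itself (through the pointer $d(\cdot)$ and the incident arcs), not on the past. Because each trust probability lies in $(0,1)$, every incident arc at every branch node is selected with strictly positive probability, which makes all the first-passage times in question have finite mean (by the same sort of reasoning as in Theorem 1).

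Next comes the key step, which uses the separating hypothesis. Since every path in $Q$ from $A$ to $C$ passes through $B$, any trajectory of the chain that starts at $A$ and reaches $C$ must visit $B$ first. Hence the first hitting times satisfy $\tau_B \leq \tau_C$, and the difference $\tau_C - \tau_B$ is the time from the first visit of $B$ until $C$ is reached. Applying the strong Markov property at the stopping time $\tau_B$, the post-$\tau_B$ trajectory is distributed as a fresh copy of the same Markov chain started at $B$, so $E_A[\tau_C - \tau_B] = E_B[\tau_C] = T^d(B,C)$. Taking expectations and using linearity then gives $T^d(A,C) = T^d(A,B) + T^d(B,C)$.

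Finally I would multiply by $\mu(d)$ and sum over $d \in \mathcal{D}$, obtaining (\ref{T(A,C)sum}) from (\ref{T sum of Td}) by linearity. The main obstacle I anticipate is the conceptual care needed to justify that the post-$\tau_B$ sub-chain really has the same law as a freshly initialized chain at $B$: this is not automatic if one averages over $d$ prematurely, since the pointers are fixed (not resampled) along a single realization. Conditioning on $d$ before invoking the strong Markov property, and only averaging against $\mu$ at the very end, is the essential bookkeeping that makes the decomposition go through.
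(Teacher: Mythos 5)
Your proof is correct and takes essentially the same route as the paper's: fix a direction vector $d$, split each sample path at the first hitting time of $B$ (the paper phrases the strong Markov step informally as ``expectation respects summation,'' which you make explicit), and only average over $d$ with respect to $\mu$ at the end. No gaps.
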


\begin{proof}
From the assumptions, there is a finite state Markov chain on the nodes of
$Q,$ with $C$ as an absorbing state. Almost every sample path starting at $A$
reaches $C$ and the expected hitting time is finite. For every sample path
there are times $t_{1}^{d}$ from $A$ to $B$ and $t_{2}^{d}$ from first arrival
at $B$ to $C,$ with total time from $A$ to $C$ given by%
\[
t^{d}=t_{1}^{d}+t_{1}^{d}.
\]
Since expectation respects summation we have
\[
T^{d}\left(  A,C\right)  =T^{d}\left(  A,B\right)  +T^{d}\left(  B,C\right)  .
\]
Taking expectations with respect to the finite space of direction vectors $d$
and the measure $\mu,$ we similarly have (\ref{T(A,C)sum}).
\end{proof}

Note that the analog of (\ref{T(A,C)sum}) for \textit{optimal} times $\hat{T}
$ (where $q$ might not be the same in the different terms) may be false; we
might have that $\hat{T}\left(  A,C\right)  >\hat{T}\left(  A,B\right)
+\hat{T}\left(  B,C\right)  $ in the event that the last two times are
minimized for different values of $q.$ As an example, consider the tree of
Figure 6, with $p=3/4.$ We showed in our earlier analysis that $\hat{T}\left(
A,H\right)  \simeq8.05$ and $\hat{T}\left(  B,H\right)  \simeq5.23.$ We can
now use the Bridge Theorem to determine $\hat{T}\left(  A,B\right)  $. We take
$X=A$ and $H=B$, $n=2,$ $c=1$ and $\beta=2.$ For $p=3/4,$ this gives $M=.866$
by (\ref{M(n,p,q)}). By (\ref{T=c+MA}) we have $\hat{T}\left(  A,B\right)
=c+M\beta=1+2\left(  .866\right)  =\allowbreak2.\,\allowbreak732.$ So $\hat
{T}\left(  A,B\right)  +\hat{T}\left(  B,H\right)  =2.\,\allowbreak
732+5.23=\allowbreak7.\,\allowbreak962<\hat{T}\left(  A,H\right)  \simeq8.05.$
Of course if in the larger time $T\left(  A,C\right)  $ we are allowed to
choose the optimal trust at every nod, this cannot occur.

\section{Trees}

In the previous section, when applying results on the star to the
circle-with-spike network of Figure 5, we used the fact that certain arcs were
bridges and certain nodes were cuts. These ideas work very well on trees,
where all branch nodes are cuts and all arcs are bridges. For a tree $Q,$ we
choose to view the home (destination) node $H$ as the root. The following
definitions apply to trees. For each node $i\neq H$ there is a unique adjacent
node $s\left(  i\right)  $ which leads to $H,$ called its \textit{successor}.
Similarly there is a set $a\left(  i\right)  =\left\{  j:s\left(  j\right)
=i\right\}  $ of nodes, empty for leaf nodes, which we call the
\textit{antecedents} of node $i.$ Finally, we define the \textit{depth}
$\delta$ of a node $i$ recursively: Leaf nodes nodes have depth $0;$ for other
nodes $i,$ $\delta\left(  i\right)  =1+\max\left\{  \delta\left(  j\right)
,~j\in a\left(  s\left(  i\right)  \right)  \right\}  .$ For every node $A$
there is a unique shortest path to $H.$ By relabeling the nodes, we can write
this path as $j=0,\dots,m,$ with $A$ labeled $0$ and $H$ labeled $m,$ with
$j+1=s\left(  j\right)  .$ Since all nodes are cuts, we can write by repeated
application of (\ref{T(A,C)sum}), the expected time from $A$ to $H$, with the
notation $S\left(  j\right)  =T\left(  j,j+1\right)  ,$ as
\begin{equation}
T\left(  A,H\right)  =T\left(  0,m\right)  =\sum\limits_{j=0}^{m-1}T\left(
j,j+1\right)  =\sum\limits_{j=0}^{m-1}S\left(  j\right)  .\label{t=sumSj}%
\end{equation}
For any node $i,$ since the arc $i,s\left(  i\right)  $ is a bridge, we use
Theorem \ref{localstar} with $H=s\left(  i\right)  $ to write that
\begin{align}
S\left(  i\right)   & =T\left(  i,s\left(  i\right)  \right)  =\lambda\left(
i,s\left(  i\right)  \right)  +M\beta\nonumber\\
& =\lambda\left(  i,s\left(  i\right)  \right)  +2M\sum\limits_{j\in a\left(
i\right)  }\left(  \lambda\left(  j,s\left(  i\right)  \right)  +S\left(
j\right)  \right)  ,\label{Sj recursion general trees}%
\end{align}
taking $n=k\left(  i\right)  ,$ the degree of $i,$ in the definition
(\ref{M(n,p,q)}) of $M=M\left(  n,p,q\right)  $ and recalling that
$\lambda\left(  i,j\right)  $ is the length of that arc. If we are considering
the counting searcher problem we take trust $\hat{q}_{k\left(  i\right)  }$
for each node $i,$ and the recursion (\ref{Sj recursion general trees}) solves
the problem, starting with leaf nodes and increasing the depth. Thus we have
shown the following.

\begin{theorem}
Let $Q$ be a tree network. The counting searcher problem is solved by taking
$\hat{q}_{k}$ equal to $\bar{q}_{k}$ as defined for the star in
(\ref{qn bar star}). For an $n-$ary tree, where all branch nodes have $n$
antecedents (and degree $n+1)$ the general solution is $\hat{q}=\bar{q}_{n+1}$
(at all nodes). So for a binary tree the solution to the Satnav problem is
$\hat{q}=\bar{q}_{3}$ and for the line graph the solution is similarly
$\hat{q}=\bar{q}_{2}.$
\end{theorem}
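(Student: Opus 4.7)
The plan is to root the tree $Q$ at $H$ and proceed by induction on depth from the leaves, applying the Bridge Theorem at each branch node. For any starting node $A$ with unique upward path $A=j_0,j_1,\ldots,j_m=H$, iterated use of Lemma 4 (valid because every branch node of a tree is a cut) gives $T(A,H)=\sum_{\ell=0}^{m-1}S(j_\ell)$. It therefore suffices to exhibit one counting strategy $(\hat q_k)$ that simultaneously minimizes every $S(i)$.

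For a branch node $i$ of degree $n=k(i)$, the arc $(i,s(i))$ is a bridge, so the Bridge Theorem applied with $X=i$ recasts the recursion (\ref{Sj recursion general trees}) as
\[
S(i)=\lambda(i,s(i))+M(n,p,q_n)\,B(i),\qquad B(i):=\sum_{j\in a(i)}\bigl(\lambda(i,j)+S(j)\bigr)\ge 0,
\]
where the coefficient $M(n,p,q_n)$ of (\ref{M(n,p,q)}) depends only on $n$, $p$, and on the trust $q_n$ assigned to degree-$n$ nodes. The Bridge Theorem states that $M(n,p,\cdot)$ attains its unique minimum on $(0,1)$ at $q_n=\bar q_n(p)$, and crucially this minimizer is independent of $B(i)$ and of trust choices elsewhere in $Q$.

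I would now induct on depth. At leaves $S(i)=\lambda(i,s(i))$ carries no trust, so the claim holds vacuously. At an internal branch node $i$ of degree $n$, the induction hypothesis gives $S(j)$ minimized at each $j\in a(i)$ under $(\bar q_k)$, so $B(i)$ is at its minimum; since $M\ge 0$ and $B(i)\ge 0$, the product $M(n,p,q_n)\,B(i)$ is minimized by minimizing each factor separately, and the first is minimized at $q_n=\bar q_n$. Because this minimizer depends only on $n$ and $p$, the counting constraint \textquotedblleft same trust at every node of the same degree\textquotedblright\ is automatically respected. Summing along the path from $A$ to $H$ then proves the claim for the counting searcher.

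For an $n$-ary tree every branch node has degree $n+1$, so the counting strategy collapses to a single number and the uniform-trust (non-counting) optimum coincides with the counting optimum $\hat q=\bar q_{n+1}$; the binary-tree and line-graph statements are the specializations $n+1=3$ and $n+1=2$. The only potential obstacle, which the Bridge Theorem removes at a stroke, is that the optimum $q_n$ at a given node $i$ could a priori depend on the return-time data $B(i)$ or on trust values elsewhere in $Q$: the independence statement of the Bridge Theorem is precisely what decouples the minimization and lets us optimize node by node.
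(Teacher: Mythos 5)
Your proposal is correct and follows essentially the same route as the paper: root the tree at $H$, decompose $T(A,H)$ into the increments $S(j)$ via the Cut Lemma, apply the Bridge Theorem at each branch node, and solve the resulting recursion by induction on depth, using that the minimizer $\bar q_n(p)$ of $M(n,p,\cdot)$ is independent of the return times and of trusts elsewhere. Your explicit factor-by-factor minimization of $M\cdot B(i)$ just fills in the paper's terse ``the recursion solves the problem, starting with leaf nodes and increasing the depth,'' and your form of the recursion (without the spurious factor of $2$ on $M$) is the one consistent with the Bridge Theorem as used in Section 7.
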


\section{Time to Cross a Line}

We now consider the case on a line graph with nodes $0,1,\dots,n$ (or even on
a one sided infinite line graph). We calculate the optimal time taken from a
node to a larger node. We do this first with variable length arcs and then
specialize to unit length arcs (graphs). We show that for $p\neq1/2$ the time
to reach $n$ is linear in $n,$ but if $p=1/2$ it is quadratic.

\begin{theorem}
Let $Q$ be a line graph on nodes $0,1,\dots$ where the length $\lambda\left(
i,i+1\right)  $ of the arc between $i$ and $i+1$ is denoted $a_{i}$ and
$b_{j}\equiv\sum_{i=0}^{j-1}a_{i}.$ Let $p\neq1/2.$ Then the least expected
time $S\left(  j\right)  =T\left(  j,j+1\right)  $ from node $j$ to node $j+1$
is given by
\begin{align}
S\left(  j\right)   & =a_{j}+2\sum_{i=1}^{j}a_{j-i}~z^{i},\text{ }%
j\geq0~\text{and }S\left(  0\right)  =a_{0},\text{where}\label{S(j)}\\
z  & =\frac{q^{2}-2pq+p}{q\left(  1-q\right)  }.\label{z}%
\end{align}
The least expected time $T\left(  0,j\right)  $ to reach node $j$ from the
leaf node $0$ is given by%
\begin{equation}
T\left(  0,j\right)  =b_{j}+2\sum_{i=1}^{j-1}b_{j-i}~z^{i}.\label{T(0,j)}%
\end{equation}
All of these times are minimized by taking the trust probability $q$ to be
$\bar{q}_{2}\left(  p\right)  =\hat{q}=\frac{p-\sqrt{p\left(  1-p\right)  }%
}{2p-1},$ which makes $z=2\sqrt{p\left(  1-p\right)  }\rightarrow0$ as
$p\rightarrow1$ (or to $0$). Hence as the reliability goes to $1,$ the time to
cross a line converges to its length $b_{j}$. We have%
\[
T\left(  0,j\right)  <b_{j}\left(  1+2\sum_{i=1}^{j}~z^{i}\right)
=b_{j}\left(  1+\frac{2\left(  z-z^{j+1}\right)  }{1-z}\right)  ,
\]
so the crossing time is linear in the length of the line.

If $p=1/2$ the optimal $q=\hat{q}=1/2,$ giving a random walk on the line,
where the crossing time is quadratic.
\end{theorem}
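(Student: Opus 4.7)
The plan is to apply the Bridge Theorem (Theorem~\ref{localstar}) locally at each node of the line, exploiting that every arc $(j,j+1)$ is a bridge. The key identification is that substituting $n=2$ into the formula (\ref{M(n,p,q)}) for $M$ gives precisely the quantity $z$ in (\ref{z}). Taking $X=j$, $H=j+1$, $c=a_j$, and observing that the single non-bridge arc out of $j$ is $(j,j-1)$ with return time $\beta_1 = a_{j-1}+S(j-1)$ (walk to $j-1$ at cost $a_{j-1}$, then wait the expected time $S(j-1)=T(j-1,j)$ to return to $j$), the Bridge Theorem yields the one-step recursion $S(j)=a_j+z\bigl(a_{j-1}+S(j-1)\bigr)$ for any common trust $q$, with base case $S(0)=a_0$ since $0$ is a leaf.

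A short induction on $j$ then verifies that (\ref{S(j)}) solves this recursion: substituting the inductive hypothesis for $S(j-1)$ and re-indexing the tail gives the claim. Iterated application of Lemma~4 along the path $0\to 1\to\cdots\to j$ (equivalently the tree identity (\ref{t=sumSj})) gives $T(0,j)=\sum_{k=0}^{j-1}S(k)$; substituting the closed form for $S(k)$ and swapping the order of summation on the tail converts $\sum_{k=0}^{j-1}\sum_{i=1}^{k}a_{k-i}z^i$ into $\sum_{i=1}^{j-1}z^i\sum_{m=0}^{j-1-i}a_m=\sum_{i=1}^{j-1}b_{j-i}z^i$, while the leading terms collapse to $\sum_{k=0}^{j-1}a_k=b_j$, yielding (\ref{T(0,j)}).

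For the optimization, note that $T(0,j)$ is a polynomial in $z$ with strictly positive coefficients, and that $z(q)>0$ on $(0,1)$ since its numerator equals $(q-p)^2+p(1-p)$. Hence minimizing $T(0,j)$ over $q$ reduces to minimizing $z(q)$. Differentiating gives the first-order condition $q^2(1-2p)+2pq-p=0$, whose relevant root is $\hat q=\bar q_2(p)$ as in (\ref{qn bar star}). The main algebraic step is to evaluate $z(\hat q)$: setting $r=\sqrt{p(1-p)}$ and using $r^2=p(1-p)$, direct substitution gives $\hat q^2-2p\hat q+p=2r^2(1-2r)/(2p-1)^2$ and $\hat q(1-\hat q)=r(1-2r)/(2p-1)^2$, whose ratio is $2r=2\sqrt{p(1-p)}$. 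This is the only real computation in the proof but is routine. The upper bound follows from $b_{j-i}<b_j$ for $i\ge 1$, and linearity in length then holds because $z<1$ for $p\ne 1/2$ (equivalent to $(2p-1)^2>0$), making the geometric sum $1+2(z-z^{j+1})/(1-z)$ bounded uniformly in $j$.

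For $p=1/2$ the $(1-2p)q^2$ term vanishes from the first-order condition, and the L'Hospital limit in (\ref{qn bar star}) gives $\hat q=1/2$; at this value $z=1$, so the geometric bound fails. The resulting process is the symmetric reflected random walk on $\{0,1,2,\dots\}$, for which the classical gambler's-ruin calculation gives expected hitting time $j^2$ from $0$ to $j$ for unit-length arcs, hence quadratic crossing time in general. The only real obstacle throughout is the algebraic simplification of $z(\hat q)$; everything else is either a direct application of Theorem~\ref{localstar} or an elementary induction.
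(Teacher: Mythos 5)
Your proposal is correct and follows essentially the same route as the paper: apply the Bridge Theorem at each node $j$ with $n=2$ (so $M=z$), derive the recursion $S(j)=a_j+z\bigl(a_{j-1}+S(j-1)\bigr)$ with $S(0)=a_0$, verify the closed form by induction, and sum via the Cut Lemma. You supply a few details the paper leaves implicit (the explicit evaluation $z(\hat q)=2\sqrt{p(1-p)}$, the observation that $T(0,j)$ is increasing in $z>0$ so minimizing $z$ suffices, and the check $z<1$), but the decomposition and key lemmas are identical.
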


\begin{proof}
To obtain the formula (\ref{S(j)}) for the incremental times $S\left(
j\right)  $ we note that $S\left(  0\right)  =a_{0}$ because $0$ is a leaf
node. To obtain a formula for $S\left(  j\right)  $ in terms of $S\left(
j-1\right)  ,$ we apply the Bridge Theorem, Theorem 3, with $H=j+1$ and $X=j,$
$c=a_{j}.$ This gives $q=\bar{q}_{2}$ and gives $S\left(  j\right)  =T\left(
X,H\right)  $ as
\[
T\left(  X,H\right)  =S\left(  j\right)  =a_{j}+\beta M.
\]
The expected return time $\beta$ when leaving node $j$ by the arc $\left(
j-1,j\right)  $ is given by $a_{j-1}$ plus the expected time to return to $j$
from $j-1,$ which by definition $S\left(  j-1\right)  .$ Finally, the general
formula for $M$ in (\ref{M(n,p,q)}) simplifies to the number $z$ in (\ref{z})
when $n=2.$ Thus we have the recursion
\begin{equation}
S\left(  j\right)  =a_{j}+\left(  a_{j-1}+S\left(  j-1\right)  \right)
z,\text{ with }S\left(  0\right)  =a_{0}.\label{recursiona}%
\end{equation}
To check the formula inductively, we write%
\begin{align*}
\left(  a_{j-1}+S\left(  j-1\right)  \right)  ~z  & =a_{j-1}z+\left(
a_{j-1}+2\sum_{i=0}^{j-1}a_{j-i}~z^{i}\right)  z\\
& =a_{j-1}z+\left(  a_{j-1}z+2\sum_{i=0}^{j}a_{j-i}~z^{i}\right) \\
& =S\left(  j\right)  -a_{j}.
\end{align*}
The formula for $T\left(  0,j\right)  $ given in (\ref{T(0,j)}) now follows
from the Cut Lemma (Lemma 4) because every node $i>0$ is a cut node.
\end{proof}

If all the arcs have unit length $a_{i}=1$ then we have $b_{j}=j$ and then the
formula (\ref{T(0,j)}) can be simplified.

\begin{corollary}
If $Q$ is a line network with unit length arcs then, for $p\neq1/2,$ the
optimal time to cross it is given by
\begin{equation}
T\left(  0,j\right)  =\frac{j-jz^{2}+2z\left(  z^{j}-1\right)  }{\left(
1-z\right)  ^{2}}\label{Tunit}%
\end{equation}
If $p=1/2,$ the optimal time is the expected time for a random walk to reach
node $j$ from node $0,$ that is, $j^{2}.$ Obviously this is quadratic rather
than linear in the length $j$ of the line. (For $p\neq1/2$ we showed this time
is linear in $j,$ in a more general context.)
\end{corollary}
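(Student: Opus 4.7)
The plan is to specialize the general formula (\ref{T(0,j)}) from Theorem 7 by substituting $a_i=1$ and hence $b_j=j$, so that $T(0,j) = j + 2\sum_{i=1}^{j-1}(j-i)\,z^i$. The whole task then reduces to evaluating this single finite sum in closed form, after which I combine over the common denominator $(1-z)^2$ to recover the stated expression. The case $p=1/2$ must be treated separately, since Theorem 7 gives $\hat q = 1/2$, which makes $z=1$ and the denominator $(1-z)^2$ vanish; there I would abandon the formula and argue directly from the induced Markov chain.

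For $p\neq 1/2$, I would split the sum as $\sum_{i=1}^{j-1}(j-i)z^i = j\sum_{i=1}^{j-1} z^i - \sum_{i=1}^{j-1} i\,z^i$. The first piece equals $(z - z^j)/(1-z)$ by the geometric series. The second is handled by differentiating $\sum_{i=0}^{j-1} z^i = (1 - z^j)/(1-z)$ and multiplying by $z$, giving $(z - j z^j + (j-1) z^{j+1})/(1-z)^2$. Putting everything over $(1-z)^2$, including the outside $j$ written as $j(1-z)^2/(1-z)^2$, the two $z^{j+1}$ contributions combine into a single $+2z^{j+1}$, the $z^j$ terms cancel outright, and what remains simplifies to $(j - jz^2 + 2z^{j+1} - 2z)/(1-z)^2 = (j - jz^2 + 2z(z^j - 1))/(1-z)^2$, matching the claim.

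For $p = 1/2$, Theorem 7 gives $\hat q = 1/2$. At any interior degree-$2$ node the searcher then moves to each neighbour with probability $1/2$ regardless of where the pointer points, while node $0$, being a leaf, reflects deterministically. Hence the dynamics are exactly those of a symmetric nearest-neighbour random walk on $\{0,1,2,\dots\}$ with reflecting barrier at $0$. Writing $S(k) = T(k,k+1)$ and conditioning on the first step from node $k\ge 1$ gives $S(k) = 1 + \tfrac12(S(k-1) + S(k))$, hence the recursion $S(k) = 2 + S(k-1)$; together with $S(0) = 1$ this yields $S(k) = 2k+1$, so that $T(0,j) = \sum_{k=0}^{j-1}(2k+1) = j^2$.

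The main obstacle is entirely bookkeeping in the first case: the two sub-sums contribute nearly cancelling terms in $z^j$ and $z^{j+1}$, and one has to be careful with signs to see that only a single $+2z^{j+1}$ (and a lone $-2z$) actually survive. Conceptually nothing in the argument goes beyond Theorem 7 and the textbook hitting-time identity for a reflected symmetric random walk.
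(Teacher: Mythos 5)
Your proposal is correct and follows exactly the route the paper intends: the paper offers no explicit proof, simply noting that with $a_i=1$ and $b_j=j$ the formula (\ref{T(0,j)}) ``can be simplified,'' and your closed-form evaluation of $j+2\sum_{i=1}^{j-1}(j-i)z^i$ carries out precisely that simplification, arriving at the stated expression. Your separate treatment of $p=1/2$ via the reflected symmetric random walk (with $S(k)=2k+1$ summing to $j^2$) likewise matches the paper's assertion and supplies the justification the paper leaves implicit.
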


We plot below in Figure 9 the optimal expected travel times from the left leaf
node $0$ to node $j.$ For $p=0$ or $1,$ $T\left(  0,j\right)  =j,$ a direct
path can always be taken. Note that the expected travel times $\bar{S}\left(
j\right)  =T\left(  j-1,j\right)  $ between consecutive nodes is increasing.%
    \begin{figure}[H]
     \centering
     \includegraphics[width=0.95\textwidth]{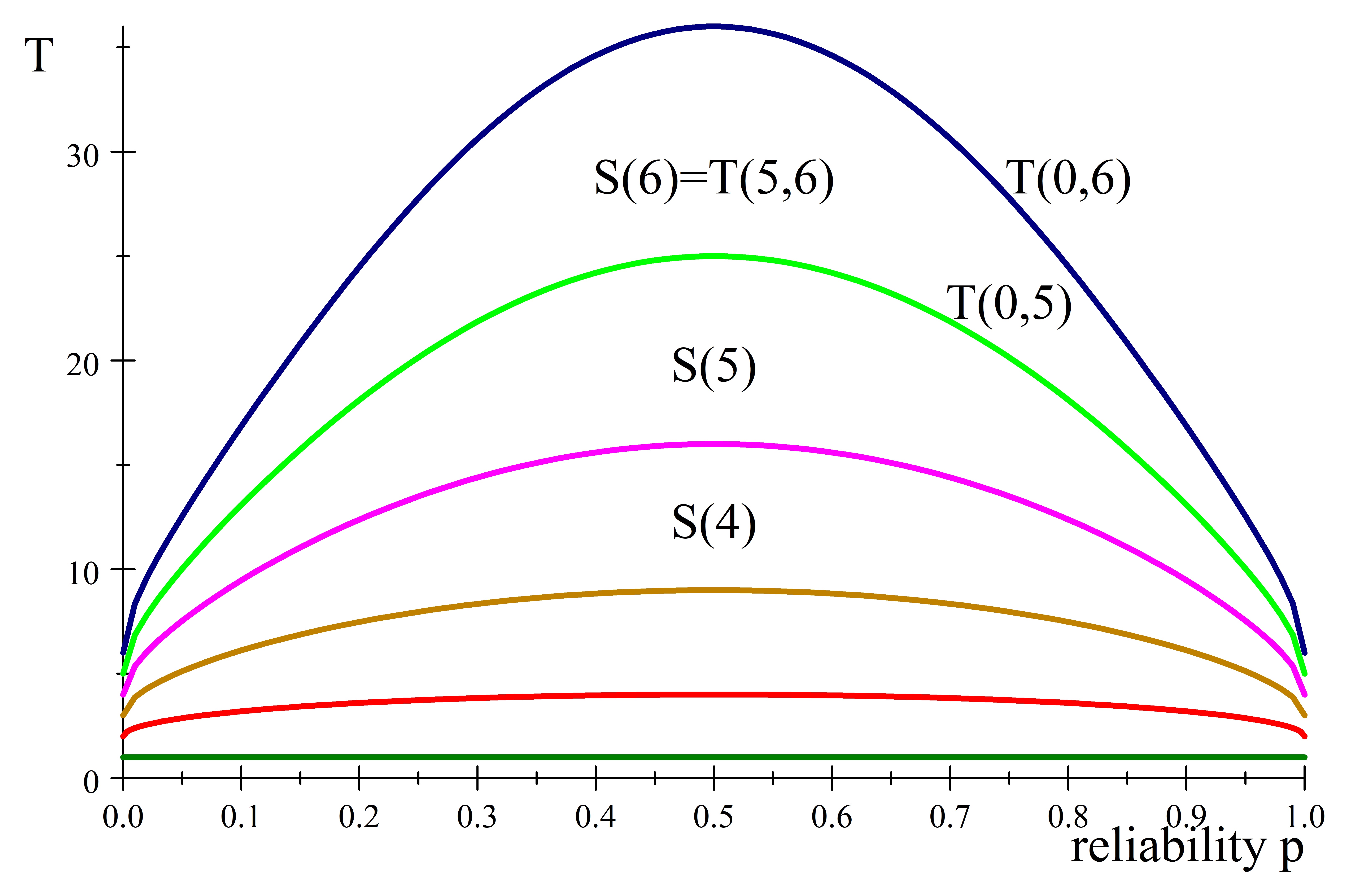}
     \caption{Plots of $T\left(  0,j\right)  ,$ $j=1$ to $6,$ gaps $S\left(
j\right)  .$}
    \end{figure}
Note that it takes longer to traverse consecutive nodes $j-1$ and $j$ as $j$
increases. It is interesting to notice the asymmetry of travel times, with is
not present in traditional shortest path problems. Observe that for $i<j$ we
have
\begin{equation}
T\left(  0,i\right)  +T\left(  i,j\right)  =T\left(  0,j\right)  ,\text{ or
}T\left(  i,j\right)  =T\left(  0,j\right)  -T\left(  0,i\right)
.\label{decomp}%
\end{equation}
For example when $p=3/4$ we have $T\left(  3,5\right)  \simeq12.187,$ using
(\ref{decomp}) and (\ref{Tunit}). Note that this doesn't depend on the total
length of the line graph, as 5 becomes an absorbing state. However if $i>j$
then the time for $T\left(  i,j\right)  $ (going left) does depend on the
length $0,1,\dots,n$ of the line graph. Looking at it so that node $n$ is on
the left, our earlier analysis shows that for $n=7$ we have
\begin{equation}
T\left(  5,3\right)  =T\left(  n-5,n-3\right)  =T\left(  2,4\right)
\simeq9.763\label{asymm}%
\end{equation}
This is clearer if we take an extreme situation with $Q$ having nodes $0$ to
$100.$ If we want to go from $1$ to $2,$ at most we can backtrack to $0.$ If
we want to go from $2$ to $1,$ if we are unlucky we may travel very far to the
right before reaching $1.$Note: Travel times from the leaf node $0$ are
greater than those from node $1$ by one. It is a matter of taste whether to
give a formula for $T\left(  1,j\right)  $ or for $T\left(  0,j\right)  .$ We
plot below the optimal expected travel times from the left leaf node $0$ to
node $j.$ For $p=0$ or $1,$ $T\left(  0,j\right)  =j,$ a direct path can
always be taken. Note that the expected travel times $\bar{S}\left(  j\right)
=T\left(  j-1,1\right)  $ between consecutive nodes is increasing.

\section{Cycle Graphs}

In this section we analyze the Satnav Problem on the cycle graphs $C_{3}$ and
$C_{4}$ of Figure 10. We believe these represent the cases where there are an
odd or even number of nodes. In the latter case there is an antipodal node
(called $C)$ to the home node $H.$ Note that $C_{4}$ is an example with
non-unique shortest paths to $H.$ So the direction at $C$ is equiprobable. The
quick general methods used on line graphs do not appear to help the analysis
for cycles, so this section is really just an introduction to the general
problem. The Satnav problem on $C_{n}$ is identical to the destination
\textit{set} problem on $L_{n+1},$ the graph 0-1-2-...-n where the problem is
to reach the set $\left\{  0,n\right\}  $ from a given node $i,$ $0<i<n$.
However we find significant qualitative differences in the solution, for
example on $C_{4}$ there is no uniformly optimal trust probability, this
depends on the starting node. This is in contrast with the Satnav Problem on
$L_{n},$ where we found a uniformly optimal trust probability $\hat{q}.$ On
the other hand, optimal travel times on a cycle are clearly symmetric, unlike
the situation found for the line at the end of Section 7.\\
    \begin{figure}[H]
     \centering
     \includegraphics[width=0.85\textwidth]{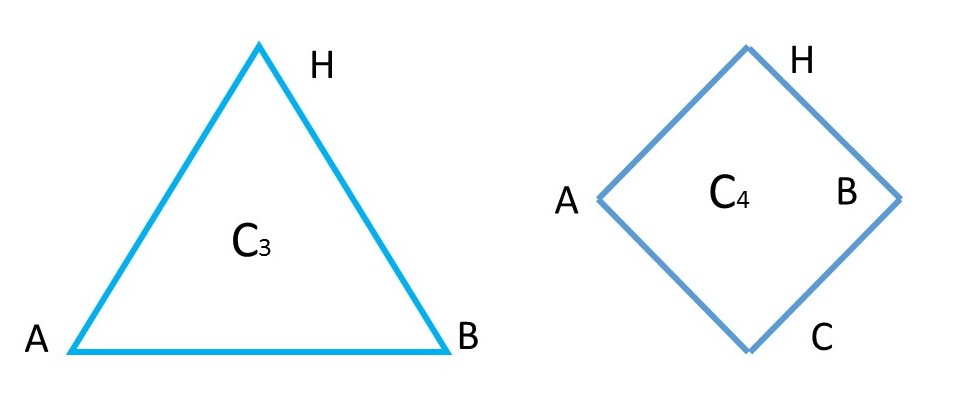}
     \caption{The cycle networks $C_3$ and $C_4.$}
    \end{figure}

\subsection{The cycle $C_{3}$}

We adopt the `slow method' used in Section 3 for the case $x=1$. This
involves, for each of the four direction vectors $d^{i}$ on the two branch
nodes, construction the two simultaneous equations for the expected time $a$
and $b$ to reach $H$ starting from $A$ and $B,$ respectively. Compared with
the solution for the triangle with $x>2$ given in Section 3.1, the doubly
correct vector is now $d^{4}=\left(  -,+\right)  $ which occurs with
probability $p^{2}.$ Recalculating the time $a_{i}=T_{A}^{d_{i}}=T^{d_{i}%
}\left(  A,H\right)  $ and averaging over the probabilities $\mu\left(
d_{i}\right)  $, we get%

\begin{align}
T_{A}  & =p^{2}~a_{4}+p\left(  1-p\right)  ~a_{2}+\left(  1-p\right)
p~a_{1}+\left(  1-p\right)  ^{2}~a_{3},\text{ giving }\\
T_{A}  & =\frac{p^{2}}{q}-\frac{3p\left(  p-1\right)  }{q^{2}-q+1}%
+\frac{\left(  1-p\right)  ^{2}}{1-q}\text{ for }x=1.\text{ }%
\end{align}
We obtain an implicit function of $\hat{q}$ as a function of $p$ by simply
setting the partial derivative of $T_{A}$ with respect to $q$ equal to zero.
We plot this implicitly in Figure 11. The symmetry of $A$ and $B$ means that
this is also the optimal trust when starting at $B.$ So there is a uniformly
optimal trust function In particular for our standard reliability $p=3/4,$ the
optimal trust is approximately $0.786\,76$ as seen in Figure 11. Note the
difference from the case $x=3$ of Section 3.1.wq23\\ \\ \\
\begin{minipage}[c]{0.5\linewidth}
    \begin{figure}[H]
     \centering
     \includegraphics[width=0.95\textwidth]{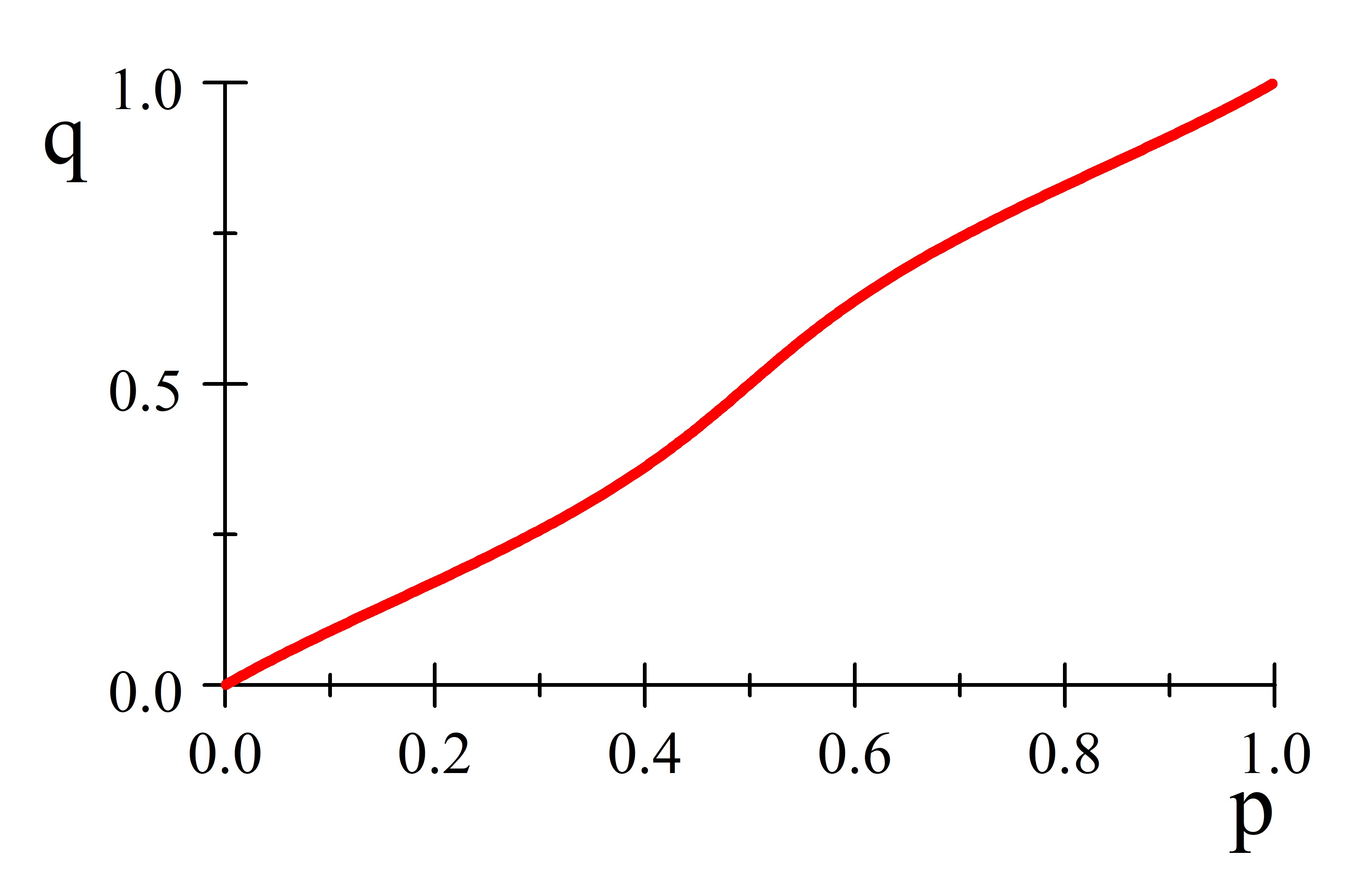}
     \caption{Optimal trust $\hat{q}.$}
    \end{figure}
\end{minipage}
\begin{minipage}[c]{0.5\linewidth}
    \begin{figure}[H]
     \centering
     \includegraphics[width=0.95\textwidth]{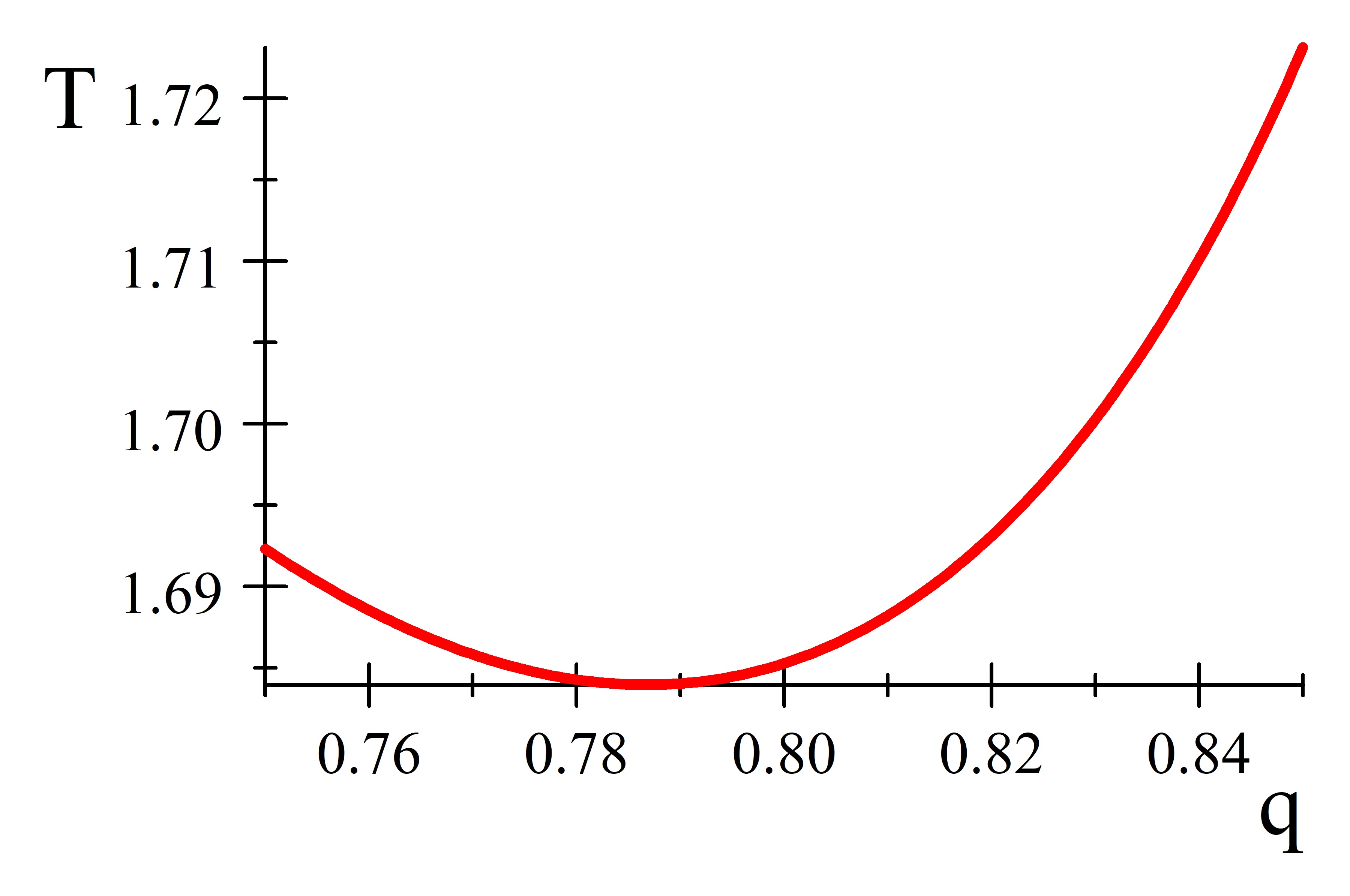}
     \caption{Plot of $T_A\left(  3/4,q\right)  .$}
    \end{figure}
\end{minipage}\newpage
\subsection{The cycle $C_{4}$}

The slow method for solving the Satnav Problem on $C_{4}$ has equation systems
for each of the eight direction vectors on the branch nodes $A,C,B.$ Note that
regardless of the reliability $p,$ the direction at $C$ is equally likely
towards $A$ or $B.$ For example, when all pointers are in the clockwise
direction, $d=\left(  +,+,+\right)  ,$ we have the system (where $a$ is the
expected time from $A$ to $H,$ same for $b$ and $c)$%

\begin{align*}
a  & =\left(  1-q\right)  \left(  1\right)  +q\left(  1+b\right)  \\
c  & =\left(  1-q\right)  \left(  a+1\right)  +q\left(  b+1\right)  \\
b  & =\left(  1-q\right)  \left(  1+c\right)  +q\left(  1\right)
\end{align*}
Using the same methods as for $C_{3},$ we can implicitly plot (see Figure 13)
the optimal trust $\hat{q}$ at all nodes when starting at $A$ (or $B),$ the
lower red curve, and when starting at $C$ (the higher green curve). The
important observation is that the cycle $C_{4},$ unlike the line graphs or the
odd cycle $C_{3},$ does \textit{not} have a uniform trust solution, the
optimal trust depends on the starting node.
    \begin{figure}[H]
     \centering
     \includegraphics[width=0.95\textwidth]{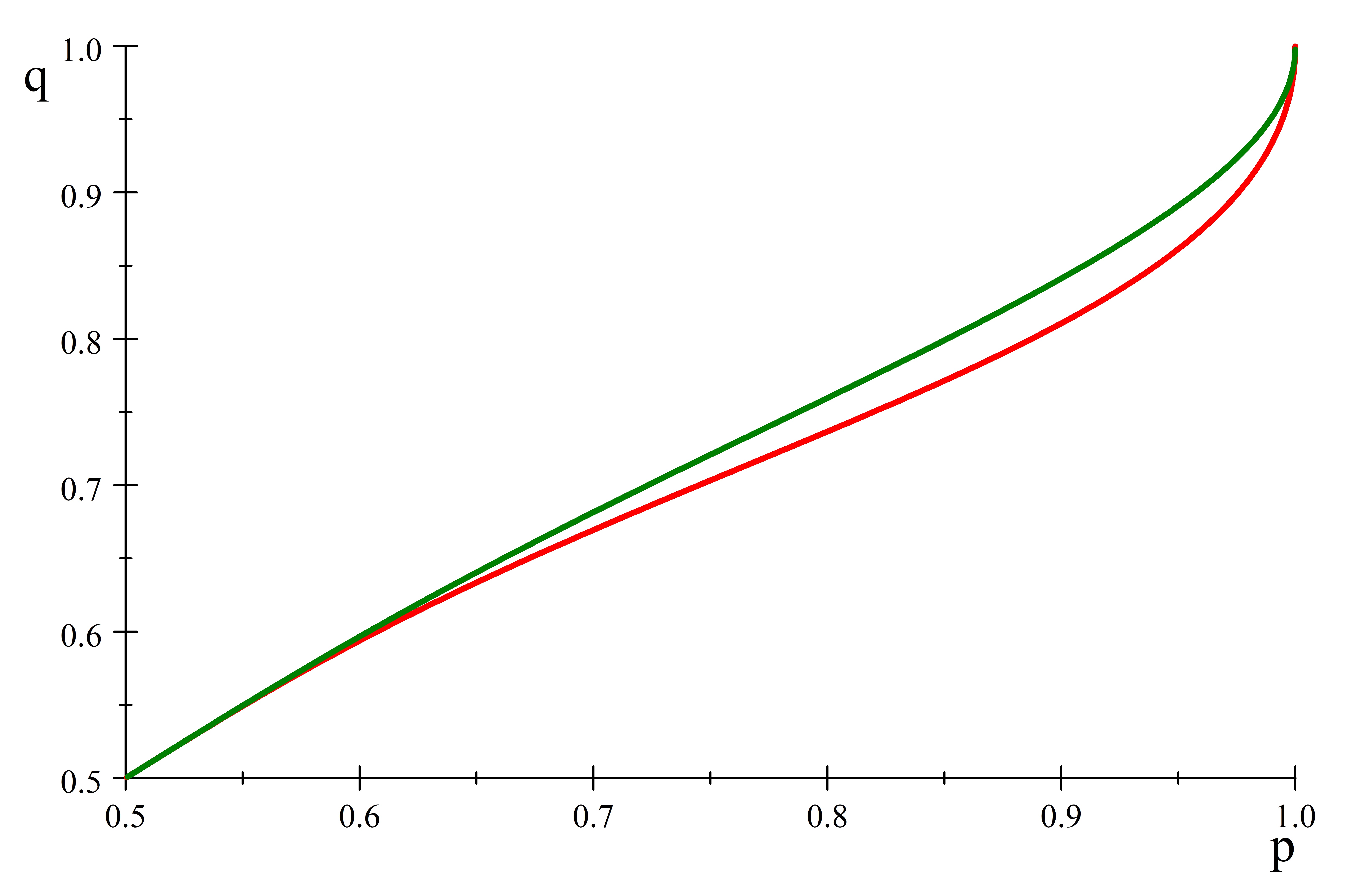}
     \caption{Circle $C_4:$ $\bar{q}\left(  p\right)  $ from $A$ (red, lower),
$C$ (green).}
    \end{figure}
\section{First To Nest Wins (Treasure Hunting)}

We consider a two-person constant sum game where the first player to reach the
Home node $H$ wins, and if they reach at the same time the winner is
determined by a fair coin toss. The payoff is the probability that Player I
wins. Since game problems are much harder than individual optimization, with
take the simplest nontrivial network, the line with three nodes, $0,1,2, $
with $H=2.$ We consider both the symmetric game where both players start at
node $1$ and the asymmetric game where they start at $0$ and $1.$ Note that
this network is also the star with three nodes. So the individual time
minimization problem has been solved earlier in two ways (star and line). We
note that this is a winner-take-all game in that each player gets a score (the
hitting time to $H$) and the lowest score wins. Such games have been analyzed
in Alpern and Howard (2018), but this version is not covered by any theory in
that paper. Both players have the same satnav (the same pointer at node $1$),
which is correct with probability $p$. Player I trusts with prob $q$, II with
prob $r$. An alternative model, not analyzed here, is for the two players to
have different Satnavs, with independent errors. In this case the game fits
exactly into the Alpern-Howard scenario.

\subsection{Symmetric Start}

Here we assume that both players start at node $1,$ so we know the value
(assuming it exists - it does) must be $1/2.$ If $d=+,$ pointer correctly
points to $2,$ the payoff $v^{+}$ satisfies the following, recalling that a
tie in reaching node $2$ has payoff $1/2.$%
\begin{align*}
v^{+}  & =qr\left(  1/2\right)  +q\left(  1-r\right)  \left(  1\right)
+\left(  1-q\right)  r\left(  0\right)  +\left(  1-r\right)  \left(
1-q\right)  \left(  v^{+}\right)  ,\text{ so}\\
v^{+}  & =\frac{2q-qr}{2q+2r-2qr}\text{ and similarly }v^{-}=\frac
{1-q+r-qr}{2\left(  1-qr\right)  }.
\end{align*}
It follows that%
\[
v\left(  p,q,r\right)  =p\frac{2q-qr}{2q+2r-2qr}+\left(  1-p\right)
\frac{1-q+r-qr}{2\left(  1-qr\right)  }.
\]
Solving the equation$\frac{\partial v\left(  p,q,r\right)  }{\partial r}=0$ to
obtain $\hat{r}\left(  p,q\right)  $ and solving $q=$ $\hat{r}\left(
p,q\right)  $ gives
\begin{equation}
\hat{q}_{sym}\left(  p\right)  =\frac{-1+p+\sqrt{1-3p+3p^{2}})}{2p-1}%
.\label{qsym2}%
\end{equation}
Thus we have shown the following.

\begin{theorem}
The optimal trust in the symmetric game on the line graph $\left\{
0,1,2=H\right\}  $ where both players start at node $1$ is given by $\hat
{q}_{sym}\left(  p\right)  $ as in (\ref{qsym2}).
\end{theorem}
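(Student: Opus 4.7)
The plan is to derive the payoff $v(p,q,r)$ to Player I by conditioning on the pointer direction at node $1$, and then to use the standard characterisation of symmetric Nash equilibria in two-person zero-sum symmetric games to reduce the problem to a single polynomial equation in $q$. For the first step, I fix $d\in\{+,-\}$ and enumerate the four joint actions at node $1$. When $d=+$ (pointer correctly toward $H$), mutual trust produces a tie (payoff $\tfrac12$), unilateral trust by I wins outright (payoff $1$), unilateral trust by II loses outright (payoff $0$), and mutual distrust sends both players to the leaf $0$ and back in the same amount of time, restarting the game with payoff $v^+$. This yields a linear equation in $v^+$ whose solution is the quoted closed form. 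The analogous enumeration with $d=-$ gives $v^-$: the roles of ``trust'' and ``distrust'' swap in the restart and tie events, but the computation proceeds identically. Averaging with weights $p,1-p$ then gives the stated $v(p,q,r)$.

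Next, since the game is symmetric and zero-sum, the set of minimax strategies for Player I coincides with that for Player II; so there is a symmetric Nash equilibrium $(q,r)=(\hat q_{sym},\hat q_{sym})$, and by the minimax theorem the value is $\tfrac12$. Hence $\hat q_{sym}$ is characterised by the fixed-point condition that $r=\hat q_{sym}$ is a best response for Player II (the minimiser of $v$) against $q=\hat q_{sym}$. I would compute the best response $\hat r(p,q)$ from the first-order condition $\partial v/\partial r=0$, and then solve the fixed-point equation $q=\hat r(p,q)$. After clearing denominators, this reduces to a quadratic in $q$ whose admissible root is $\hat q_{sym}(p)$ as in (\ref{qsym2}).

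The main obstacle is algebraic: selecting the correct root of the quadratic so that $\hat q_{sym}(p)\in(0,1)$, and interpreting the resulting expression at $p=\tfrac12$ where the denominator $2p-1$ vanishes. I would pin down the correct branch of the square root by two boundary checks: as $p\to 1$ the pointer is perfect and one expects $\hat q_{sym}\to 1$, while at $p=\tfrac12$ the pointer is uninformative and symmetric randomisation forces $\hat q_{sym}=\tfrac12$, recovered as the L'H\^opital limit of (\ref{qsym2}). A secondary check is confirming that the critical point of $v(p,q,\cdot)$ is indeed a minimum; this follows from the strict positivity of the denominators $2(q+r-qr)$ and $2(1-qr)$ on $(0,1)^2$ together with the sign of $\partial^2 v/\partial r^2$ there, ensuring that the first-order condition genuinely identifies Player II's best response rather than a worst response.
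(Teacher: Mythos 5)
Your proposal is correct and follows essentially the same route as the paper: condition on the pointer direction to get the linear equations for $v^{+}$ and $v^{-}$ (with the tie/restart cases exactly as you enumerate them), average with weights $p$ and $1-p$, obtain Player II's best response from $\partial v/\partial r=0$, and solve the fixed-point equation $q=\hat{r}\left(  p,q\right)  $ for the admissible root, which is (\ref{qsym2}). Your added checks (root selection via the limits $p\rightarrow1$ and $p\rightarrow1/2$, the second-order condition, and the symmetry argument pinning the value at $1/2$) are sound embellishments rather than a different method.
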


Figure 14 shows the intersection of the optimal response curves when $p=2/3$
at $\hat{q}_{sym}\left(  2/3\right)  =\allowbreak\sqrt{3}-1=\allowbreak
0.732\,05$ for both players.%
    \begin{figure}[H]
     \centering
     \includegraphics[width=0.95\textwidth]{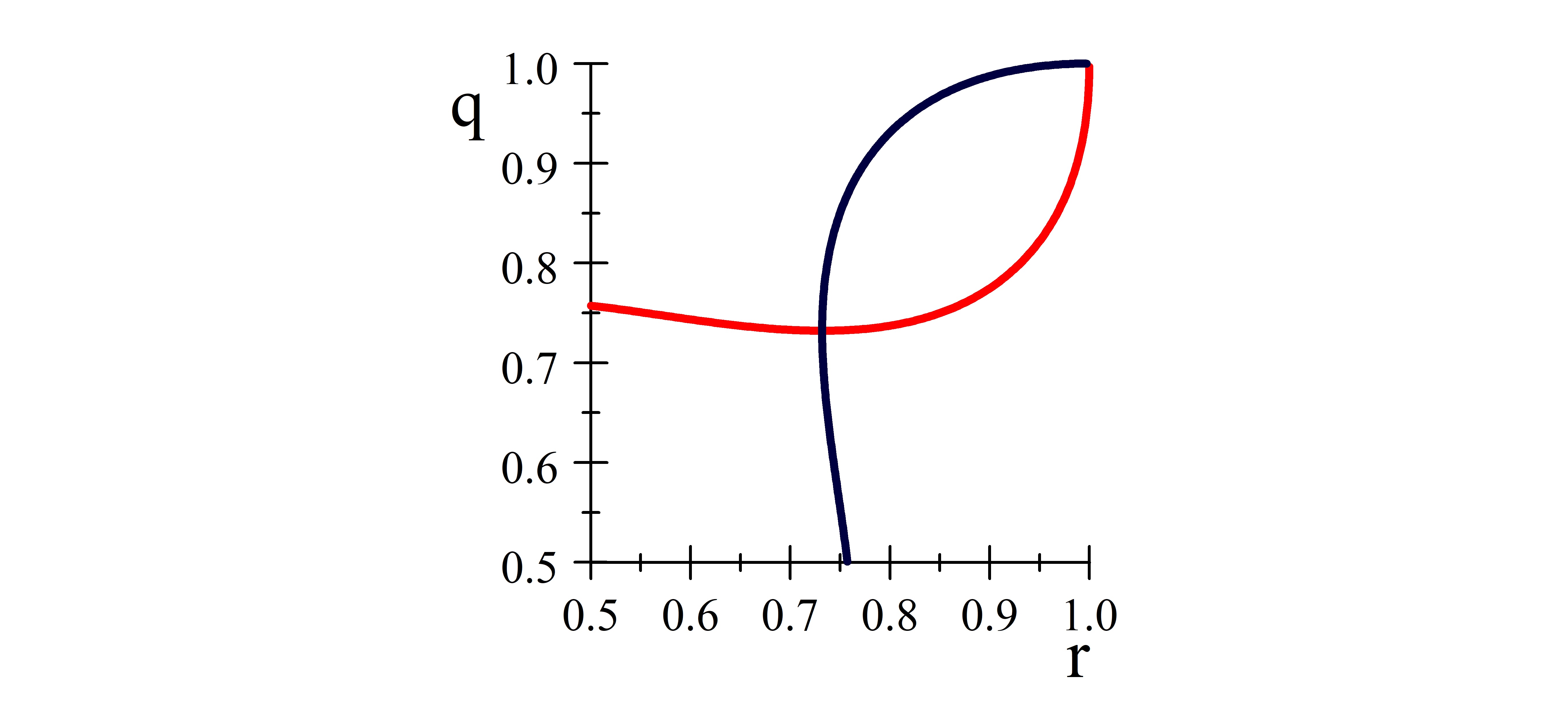}
     \caption{Plots of $q=\hat{r}\left(  r\right)  $ (red), $r=\hat{r}\left(
q\right)  $ (blue).}
    \end{figure}
\subsection{Asymmetric Start}

We now consider the scenario where Player I starts at node $1$ (at time $1)$
and Player II starts at node $0.$ Note that as long as the game is being
played, this will be the position at all odd times, and at all even times
Player II will be at node 1 and Player I will be at node $0.$ There cannot be
a tie. We take $q$ as I's trust and $r$ as II's trust.

\begin{theorem}
Consider the game on the line with node set $\left\{  0,1,2=H\right\}  $ where
first to $H$ wins. Suppose player I starts at node 1 and player II starts at
node 0. It is optimal for player II to follow a random walk, that is, trust
$r=1/2.$ For player I there are three cases.

\begin{enumerate}
\item If the reliability satisfies $p\geq4/5,$ I's optimal trust is $\hat
{q}=1,$ so the value is $v=p.$ (Either I goes immediately to $H$ and wins or
he oscillates between $1$ and $0$ and loses.)

\item If $1/2<p\leq4/5,$ then $\hat{q}=Q\left(  p\right)  =\left(
1+p-3\sqrt{p\left(  1-p\right)  }\right)  /\left(  2p-1\right)  .$ Player I
wins with probability (value)
\[
v=\left(  4/3\right)  \left(  1-\sqrt{p\left(  1-p\right)  }\right)  .
\]

\item If $p=1/2$ then both player I also optimally follows a random walk and
wins with probability
\[
v=\frac{1}{2}+\frac{1}{8}+\frac{1}{32}+\dots+~=\frac{1/2}{1-1/4}=\frac{2}{3}%
\]

\end{enumerate}
\end{theorem}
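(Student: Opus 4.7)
The plan is to set up a recursion for the game value conditional on the pointer direction, exploit a concavity/convexity structure to identify the saddle point, and then treat the boundary case $p\geq 4/5$ and the degenerate case $p=1/2$ separately.

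First I would note that there is only one branch node (node $1$), with pointer $d\in\{+,-\}$ drawn once for the whole game, and that as long as neither player has reached $H$ the positions alternate: Player I is at node $1$ at odd times and at node $0$ at even times, while Player II is the reverse. Conditioning on $d=+$, Player I at an odd time reaches $H$ with probability $q$ (otherwise is sent to $0$) and Player II at an even time reaches $H$ with probability $r$ (otherwise is sent to $0$); symmetrically for $d=-$ with $q,r$ replaced by $1-q,1-r$. A first-step analysis gives
\[
v^{+}=q+(1-q)(1-r)\,v^{+},\qquad v^{-}=(1-q)+q\,r\,v^{-},
\]
so $v^{+}=q/(q+r-qr)$ and $v^{-}=(1-q)/(1-qr)$, and the overall payoff is $v(p,q,r)=p\,v^{+}+(1-p)\,v^{-}$.

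Next I would compute the second partials of $v$ on the open square $(0,1)^{2}$. A direct calculation yields
\[
\frac{\partial^{2}v}{\partial q^{2}}=-2r(1-r)\Bigl[\tfrac{p}{(q+r-qr)^{3}}+\tfrac{1-p}{(1-qr)^{3}}\Bigr]<0,\qquad \frac{\partial^{2}v}{\partial r^{2}}>0,
\]
so $v$ is strictly concave in $q$ and strictly convex in $r$. Consequently any interior critical point is automatically a saddle and equals the value of the game, which reduces the work to locating that critical point. Setting both first-order conditions to zero and using that at $r=1/2$ we have $q+r-qr=(q+1)/2$ and $1-qr=(2-q)/2$, the two FOCs collapse to the single equation $\sqrt{p}\,(2-q)=\sqrt{1-p}\,(q+1)$; solving gives $\hat{q}=Q(p)=(1+p-3\sqrt{p(1-p)})/(2p-1)$. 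Substituting back, one gets $q+1=3\sqrt{p}/(\sqrt{p}+\sqrt{1-p})$ and $2-q=3\sqrt{1-p}/(\sqrt{p}+\sqrt{1-p})$ at this critical point, whence after cancellation the value simplifies to $v=(4/3)\bigl(1-\sqrt{p(1-p)}\bigr)$.

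Finally I would handle the boundary and degenerate cases. Since $Q(p)=1$ exactly at $p=4/5$ and exceeds $1$ for larger $p$, the concave map $q\mapsto v(p,q,1/2)$ is increasing on $[0,1]$ for $p\geq 4/5$, so the constrained maximum is $\hat{q}=1$; plugging $q=1$ into the recursions gives $v^{+}=1$ and $v^{-}=0$ identically in $r$, hence $v=p$ and Player II is indifferent among all trust values, so $r=1/2$ is one of her best responses. Consistency with the earlier formula is confirmed by $(4/3)(1-\sqrt{(4/5)(1/5)})=(4/3)(3/5)=4/5$. For $p=1/2$ the expression $Q(p)$ is indeterminate; taking the limit (equivalently, observing that the FOC reads $2-q=1+q$) yields $\hat{q}=1/2$, reducing both walkers to unbiased random walks, and the value $2/3$ then follows either by substitution in $v$ or by the geometric series $\sum_{k\geq 0}(1/4)^{k}/2=2/3$ spelled out in the statement.

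The main obstacle is the algebraic simplification in the third step: checking that substituting $\hat{q}=Q(p)$ into $p\,v^{+}+(1-p)\,v^{-}$ collapses cleanly to $(4/3)(1-\sqrt{p(1-p)})$. The conceptual key that makes the saddle-point verification painless is the concavity in $q$ and convexity in $r$ of $v$, without which one would have to verify both inequalities of the minimax by hand; a modest amount of care is also needed at the boundary $p=4/5$ where the interior and corner regimes must be glued together.
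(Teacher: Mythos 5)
Your proposal is correct, and the setup (the recursions for $v^{+},v^{-}$ and the payoff $v(p,q,r)=p\,v^{+}+(1-p)\,v^{-}$) coincides with the paper's. Where you genuinely diverge is in how the equilibrium is certified in the interior regime $1/2<p<4/5$. The paper works entirely through best-response functions: it shows $Q(p)$ solves Player I's first-order condition against $r=1/2$, then explicitly computes Player II's (rather messy, square-root-laden) best-response function $\hat{r}(p,q)$ and verifies $\hat{r}(p,Q(p))=1/2$, i.e.\ it checks mutual best responses by brute force. You instead observe that $v$ is strictly concave in $q$ and strictly convex in $r$ on $(0,1)^{2}$, so that any interior critical point is automatically a saddle point; you then note that at $r=1/2$ both first-order conditions reduce to the single equation $p/(q+1)^{2}=(1-p)/(2-q)^{2}$, whose solution is $Q(p)$. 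This eliminates the need for the explicit $\hat{r}(p,q)$ altogether and also explains structurally \emph{why} $r=1/2$ works: the two partial derivatives are proportional there. Your approach buys a cleaner and more robust verification (and you actually carry out the value simplification to $(4/3)(1-\sqrt{p(1-p)})$, which the paper only asserts); the paper's approach buys the response curves themselves, which it wants anyway for Figure 15. For the boundary case $p\ge 4/5$ your argument (concavity plus $Q(p)\ge 1$, equivalent to the derivative at $q=1$ being nonnegative, together with Player II's indifference against $q=1$) is essentially the paper's positivity-of-$f(p,q)$ argument in different clothing; for $p=1/2$ you take the limit directly where the paper appeals to continuity from case 2 --- both are fine. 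No gaps.
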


\begin{proof}
Let $v=v\left(  p,q,r\right)  $ denote the payoff (probability I wins)when
player I is at 1 and $w$ denote the payoff (probability that I wins) when
player II is at 1. As above, I trusts with probability $q,$ II with
probability $r.$ If $d=+$ (pointer at node 1 is correct, to right) then we
have
\begin{align*}
v^{+}  & =q\left(  1\right)  +\left(  1-q\right)  \left(  w^{+}\right)
,~w^{+}=r\left(  0\right)  +\left(  1-r\right)  \left(  v^{+}\right)
,\text{so}\\
v^{+}  & =\frac{q}{q+r-qr}\text{ and }w^{+}=\frac{q-qr}{q+r-qr}.
\end{align*}
Similarly$\allowbreak$ if $d=-$ (points to left, to $0),$ we have%
\begin{align*}
v^{-}  & =q\left(  w^{-}\right)  +\left(  1-q\right)  \left(  1\right)
,~w^{-}=r\left(  v^{-}\right)  +\left(  1-r\right)  \left(  0\right) \\
v^{-}  & =\frac{1-q}{1-qr},w^{-}=\frac{r-qr}{1-qr}%
\end{align*}
This gives the payoff (winning probability) for Player I, when starting at
node $1,$ as%
\begin{equation}
v\left(  p,q,r\right)  =p\frac{q}{q+r-qr}+\left(  1-p\right)  \frac{\left(
1-q\right)  }{1-qr}.\label{vasym}%
\end{equation}
For those preferring a more probabilistic coin tossing derivation of equation
(\ref{vasym}), consider that I and II have coins which come up heads with
respective probabilities $a$ and $b,$ and (starting with I) they alternate
tossing until one of them gets heads and wins. The probability that I wins on
the $2i+1$th toss is $\left(  \left(  1-a\right)  \left(  1-b\right)  \right)
^{i}a.$ So the probability that I wins is given by%
\[
a\sum_{1}^{\infty}\left(  \left(  1-a\right)  \left(  1-b\right)  \right)
^{i}=\frac{a}{1-\left(  1-a\right)  \left(  1-b\right)  }.
\]
If the pointer is correct, +, then the probabilities of going to node $2=H$
and winning for I and II when at node $1$ are given by $a^{+}=q,b^{+}=r$ and
when pointer is incorrect, they are $a^{+}=1-q,b^{+}=1-r.$ So the probability
that I wins is given by%
\[
p~\frac{a^{+}}{1-\left(  1-a^{+}\right)  \left(  1-b^{+}\right)  }+\left(
1-p\right)  \frac{a^{-}}{1-\left(  1-a^{-}\right)  \left(  1-b^{-}\right)  },
\]
which simplifies to (\ref{vasym}). We now prove the three assertions.

\begin{enumerate}
\item Since $q=1$ guarantees player I wins with probability $p,$ it is enough
to show that a random walk $\left(  r=1/2\right)  $ for player II guarantees
that I wins with probability $\leq p.$ We calculate
\begin{align*}
\frac{\partial v\left(  p,q,1/2\right)  }{\partial q}  & =\frac{2~f\left(
p,q\right)  }{\left(  q-2\right)  ^{2}\left(  q+1\right)  ^{2}},\text{
where}\\
f\left(  p,q\right)   & =-1+5p-2q-2pq-q^{2}+2pq^{2}.
\end{align*}
Since $f\left(  p,q\right)  $ is positive on $4/5<p\leq1,$ $0\leq q\leq1,$ it
follows that $v\left(  p,q,1/2\right)  $ is increasing in $q$ in this range,
so that the best response of player I to $r=1/2$ is $q=1.$ Thus playing
randomly for player II keeps the probability that I wins no more than $p.$

\item In this region of $p,$ the first order equation $f\left(
p,q,1/2\right)  =0$ has the unique probability solution $\hat{q}=Q\left(
p\right)  $ given in the statement. So $Q\left(  p\right)  $ is the optimal
response to $r=1/2.$ The optimal response function for Player II is obtained
by the first order condition%
\[
\frac{\partial v\left(  p,q,r\right)  }{\partial r}=0,\text{ so the optimal
response }\hat{r}=\hat{r}\left(  p,q\right)  \text{ is given by}%
\]%
\[
\frac{2q-2q^{2}+2pq^{2}-\sqrt{\left(  -2q+2q^{2}-2pq^{2}\right)
^{2}-4(p-q^{2}+pq^{2})(-1+p+2q-2pq-q^{2}+2pq^{2})}}{2\left(  -1+p+2q-2pq-q^{2}%
+2pq^{2}\right)  }%
\]
Now fix $p$ and consider Player II's best response to $Q\left(  p\right)  $
for Player I. We find that%
\[
\hat{r}\left(  p,Q\left(  p\right)  \right)  =1/2.
\]
This means that the best response is $r=1/2,$ so $Q\left(  p\right)  $ and 1/2
form an equilibrium.

\item The statement of the Theorem shows an easy way to compute the value of
the game, given that both players adopt a random walk. The optimality of a
trust of 1/2, the random walk, can be obtained by continuity from part 2.
\end{enumerate}
\end{proof}

The alert reader will note that we have avoided the computation of the optimal
response $\hat{q}\left(  r\right)  $ to a Player II strategy of $r.$ In fact
we have derived this response function and we plot the two curves in Figure
15, for $p=2/3,$ with an intersection at $q=Q\left(  2/3\right)
=\allowbreak0.757\,36$ and $r=1/2.$%
    \begin{figure}[H]
     \centering
     \includegraphics[width=0.9\textwidth]{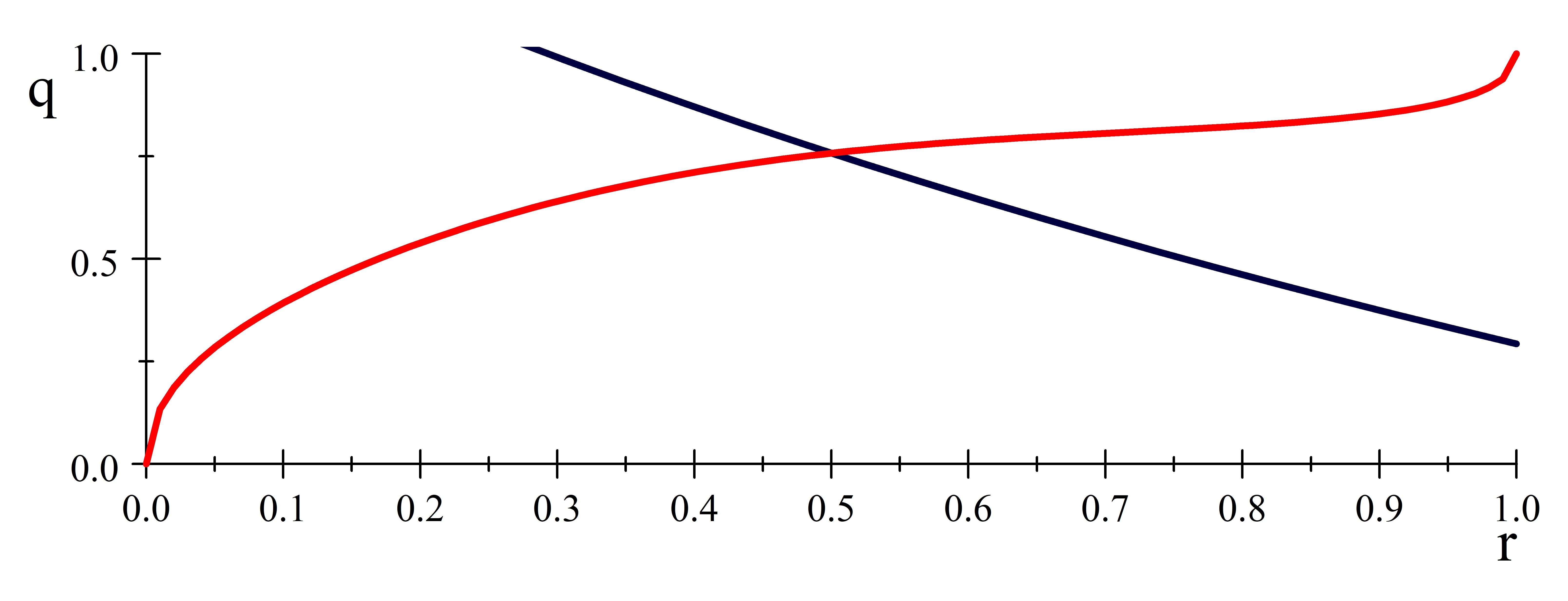}
     \caption{Response curves $\hat{q}\left(  r\right)  $ (red) and $\hat
{r}\left(  q\right)  ,$ $p=2/3.$}
    \end{figure}
    \begin{figure}[H]
     \centering
     \includegraphics[width=0.9\textwidth]{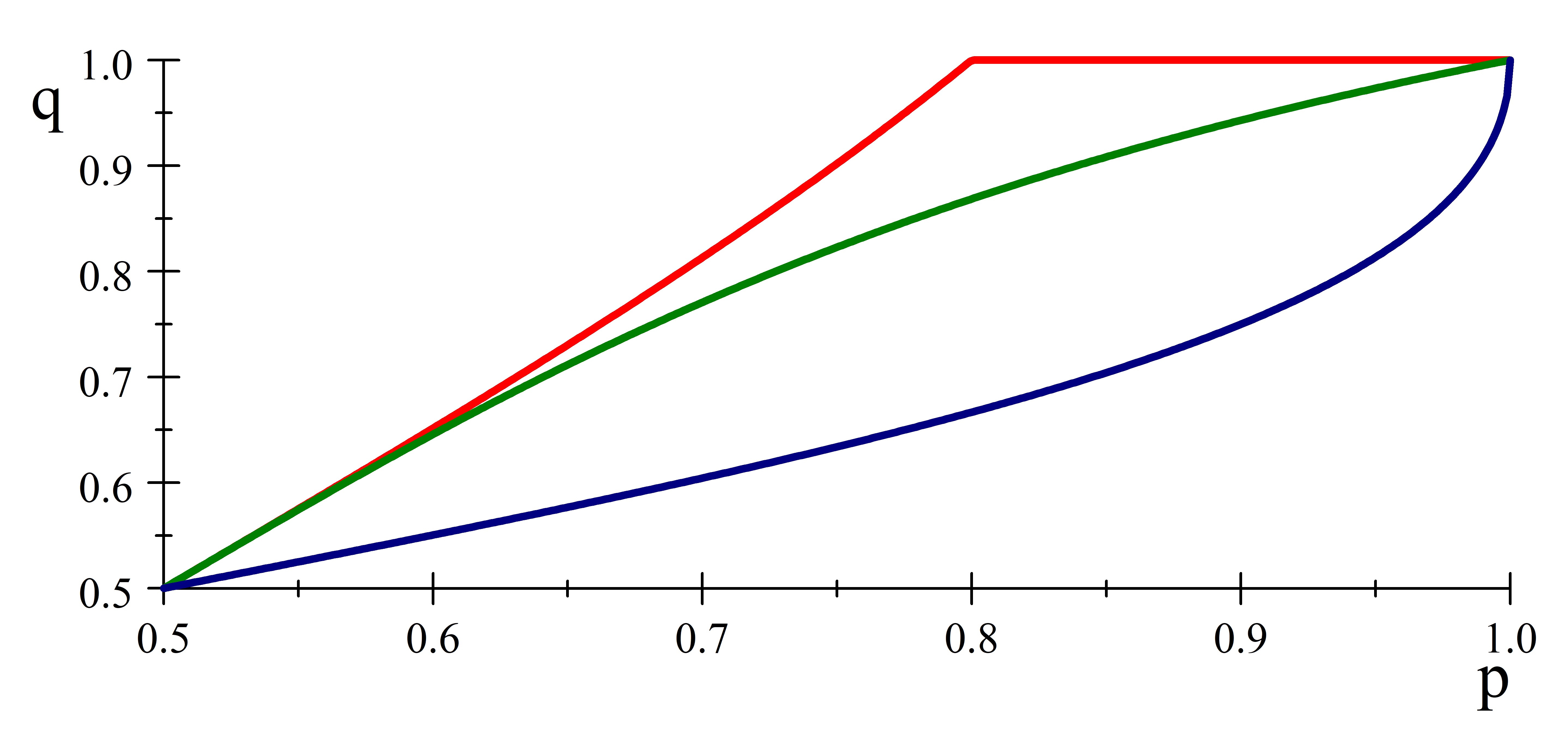}
     \caption{Optimal $q$ for asymmetric game (top), symmetric game, individual
(bottom).}
    \end{figure}

Figure 16 summarizes the optimal trusts for the symmetric and asymmetric
games, compared with an individual who wants to minimize the expected time to
reach node $2$ from node $1.$ It shows the optimal trust for (top) the player
starting at node $1$ in the asymmetric game, (middle) the symmetric game and
(bottom) an individual using $\bar{q}_{2}\left(  p\right)  $ minimizing to
minimize the expected time to get to node $2$ from node 1. The optimal trust
is $1/2$ for Player II in the asymmetric game.

Before leaving the asymmetric game, it is worth giving an intuitive but false
idea for the solution. Note that when Player II is considering his choice of
trust $r,$ he realizes that this value will only be used if and when he gets
to node 1, in which case Player I will be at node $0$ (if the game has not
ended). So in a sense he is in the same position as Player 1 was in at the
start of the game. Consequently, at an equilibrium $r$ should be the same as
$q.$ We have shown this is false, but we leave it up to the reader to find a
flaw in this argument.

\section{Conclusion}

This paper presents a very simple model of finding shortest time paths in
networks with unreliable directional information. We give a simple but slow
method which works on any network and derive some theory which gives quick
solutions for some families of networks. Our model of the search agent is very
simple. He trusts the pointer direction with a chosen probability, possibly
dependent on the degree of the node he is at. More sophisticated agents might
be modeled in the future. For example, it seems reasonable to assume that, in
addition to counting the degree, he can remember which arc he has just arrived
on. Then he can also choose that arc (that is, backtrack) with a different
probability (likely smaller) than the other incident arcs.

We also considered a treasure hunt, where two agents try to be the first to
reach the home node, and to find the treasure. Here, we modeled this problem
in a scenario where both agents (players) have the same pointers, possibly
because they use the same brand of Satnav (GPS). An alternative model which
seems to present interesting facets is to assume they have different brands,
and independent pointers. Additionally, the two brands might have different
reliabilities. Or more generally, the players could have different targets.
This is also a model of what are called `patent races'.

For future work, the model could be modified. For example, instead of a
searcher who seeks a fixed home node, we could have the home node viewed as
another mobile searcher, as in the rendezvous problem of Ozsoyeller et al
(2019). Or the searcher might want to visit a sequence of nodes (rather than
just one) in an effort to patrol the graph against intruders as in Basilico et
al (2017). A similar approach might be taken to deal with other recommendation
systems provided by black box AI processes which are known to be faulty.

\section{Acknowledgement}

The author acknowledges support from the AFIT Graduate School of Engineering
and Management, FA8075-14-D0025.

\end{document}